\theoremstyle{plain}
\theoremstyle{plain}
\newtheorem{prop}{\protect\propositionname}
\theoremstyle{plain}
\newtheorem{lem}{\protect\lemmaname}
\theoremstyle{plain}
\newtheorem{thm}{\protect\theoremname}
\theoremstyle{plain}
\newtheorem{cor}{\protect\corollaryname}  
\theoremstyle{definition}
\theoremstyle{definition}
\theoremstyle{definition}
\newtheorem{rem}{\protect\remarkname}
\providecommand{\claimname}{Claim}
\providecommand{\lemmaname}{Lemma}
\providecommand{\propositionname}{Proposition}
\providecommand{\theoremname}{Theorem}
\providecommand{\corollaryname}{Corollary} 
\providecommand{\definitionname}{Definition}
\providecommand{\assumptionname}{Assumption}
\providecommand{\remarkname}{Remark}
\DeclareMathOperator*{\argmax}{arg\,max}
\newcommand{\comment}[1]{}
\newcommand{\Xv}{\mathbf{X}}
\newcommand{\yv}{\mathbf{y}}
\newcommand{\Fc}{\mathcal{F}}
\newcommand{\Xc}{\mathcal{X}}
\begin{document}

% If your paper is accepted and the title of your paper is very long,
% the style will print as headings an error message. Use the following
% command to supply a shorter title of your paper so that it can be
% used as headings.
%
%\runningtitle{I use this title instead because the last one was very long}

% If your paper is accepted and the number of authors is large, the
% style will print as headings an error message. Use the following
% command to supply a shorter version of the authors names so that
% they can be used as headings (for example, use only the surnames)
%
%\runningauthor{Surname 1, Surname 2, Surname 3, ...., Surname n}

\twocolumn[

\aistatstitle{Gaussian Process Bandit Optimization with Few Batches}

\aistatsauthor{ Zihan Li \And Jonathan Scarlett }

\aistatsaddress{ National University of Singapore \And  National University of Singapore } ]

\begin{abstract}
In this paper, we consider the problem of black-box optimization using Gaussian Process (GP) bandit optimization with a small number of batches. Assuming the unknown function has a low norm in the Reproducing Kernel Hilbert Space (RKHS), we introduce a batch algorithm inspired by batched finite-arm bandit algorithms, and show that it achieves the cumulative regret upper bound $O^\ast(\sqrt{T\gamma_T})$ using $O(\log\log T)$ batches within time horizon $T$, where the $O^\ast(\cdot)$ notation hides dimension-independent logarithmic factors and $\gamma_T$ is the maximum information gain associated with the kernel.  This bound is near-optimal for several kernels of interest and improves on the typical $O^\ast(\sqrt{T}\gamma_T)$ bound, and our approach is arguably the simplest among algorithms attaining this improvement.  In addition, in the case of a constant number of batches (not depending on $T$), we propose a modified version of our algorithm, and characterize how the regret is impacted by the number of batches, focusing on the squared exponential and Mat\'ern kernels.  The algorithmic upper bounds are shown to be nearly minimax optimal via analogous algorithm-independent lower bounds.
\end{abstract}

\section{INTRODUCTION}

Black-box optimization is the problem of maximizing an unknown function using only point queries that are typically expensive to evaluate.  This problem has a wide range of applications including hyperparameter tuning \citep{Sno12}, experimental design \citep{Sri09}, recommendation system \citep{Van14}, and robotics \citep{Liz07}. For example, tuning the hyperparameters of a machine learning model can be considered as a black-box optimization problem, where an evaluation of the function $f$ at input $x$ trains the model on hyperparameters $x$, and returns the cross-validation accuracy $f(x)$ on the validation set. Moreover, large models would make the evaluations expensive.  Gaussian Process (GP) optimization is a prominent technique for maximizing black-box functions. The main idea is to place a prior over the unknown function and update the posterior distribution based on the noisy observations. 

In this paper, we consider the problem of GP optimization in the case that observations are performed in batches, rather than being fully sequential.  This is known to provide considerable benefits in settings where it is possible to perform multiple queries in parallel.  The key distinction in our work compared to earlier works is that we consider the number of batches to be very low, namely, either independent of time horizon $T$, or growing very slowly as $O(\log \log T)$.

\subsection{Related Work}

Theoretical studies of GP optimization problems can be categorized into the Bayesian setting (i.e., the function is assumed to be drawn from a GP) or the non-Bayesian Reproducing Kernel Hilbert Space (RKHS) setting (i.e., the function has bounded RKHS norm).  In this paper, we focus on the latter, which is often known as GP bandit optimization.

Classical GP optimization methods such as Gaussian process upper confidence bound (GP-UCB) \citep{Sri09}, expected improvement (EI) \citep{Mockus78}, and Thompson sampling (TS) \citep{Tho33} iteratively choose points by optimizing their acquisition functions, and typically have a cumulative regret guarantee of $O^\ast(\sqrt{T}\gamma_T)$ under the RKHS setting, where the $O^\ast(\cdot)$ notation hides dimension-independent logarithmic factors, and $\gamma_T$ is the maximum information gain associated with the kernel. 

The $O^\ast(\sqrt{T}\gamma_T)$ scaling is not tight in general, and can even fail to be sublinear, including for the Mat\'ern kernel with broad choices of the smoothness $\nu$ and dimension $d$.  The SupKernelUCB algorithm \citep{Val13} overcomes this limitation and attains an $O^\ast(\sqrt{T\gamma_T})$ guarantee on discrete domains, which extends to continuous domain under mild Lipschitz-style assumptions.  In view of recent improved bounds on $\gamma_T$ \citep{Vak20a}, this nearly matches algorithm-independent lower bounds for the squared exponential (SE) and Mat\'ern kernels \citep{Sca17a}.

SupKernelUCB is not considered to be practical \citep{Cal19}, and a more practical algorithm guaranteeing sublinear regret for the Mat\'ern kernel was given by \cite{Jan20}, albeit with suboptimal scaling.  More recently, two more approaches were shown to attain $O^\ast(\sqrt{T\gamma_T})$ scaling: Robust Inverse Propensity Score (RIPS) for experimental design \citep{Jamieson21}, and a tree-based domain-shrinking algorithm known as GP-ThreDS \citep{Salgia2020}.  GP-ThreDS is shown to perform well empirically, while an advantage of the RIPS approach is that it requires at most $O(\log T)$ batches and is robust to model misspecification.

% Recently, proposed the SupKernelUCB algorithm which attains a regret guarantee of $O^\ast(\sqrt{T\gamma_T})$ on discrete domains; \cite{jan20} introduced $\pi$-GP-UCB that firstly attains a sublinear regret guarantee of $O^\ast\big(T^\frac{d(2d+3)+2\nu}{d(2d+4)+4\nu}\big)$ on continuous domains for the Mat\'ern kernel with smoothness parameter $\nu>1$ and dimension $d\geq 1$ under the RKHS setting. The recent algorithms RIPS-for-Regret-Minimization \cite{jamieson21} using experimental design and GP-ThreDS \cite{salgia2020} using a thresholded domain shrinking structure achieve a cumulative regret of $O^\ast(\sqrt{T\gamma_T})$ for all kernels. 

Several works have also proposed batch algorithms for GP optimization, including GP-UCB-PE \citep{Con13}, GP-BUCB \citep{Des14a}, and SynTS \citep{Kan18}.  The focus in these works is on {\em fixed batch sizes}, and when the batch size is held fixed, the number of batches scales as $\Theta(T)$, in stark contrast to our focus.  On the other hand, if a constant number of batches is used and all batches have equal size, then the first batch alone contributes $\Omega(T)$ regret.  To overcome this, we will crucially rely on {\em variable-length} batches.  While our algorithm and its analysis are new, some algorithms exist that also exploit varying batch lengths; for instance, the above-mentioned work of \cite{Jamieson21} also falls in this category, as does the practical B3O algorithm that chooses points according to the peaks of an infinite mixture model \citep{Nguyen16}.

While GP-BUCB enjoys a cumulative regret of $O^\ast(\sqrt{T}\gamma_T)$ regardless of the batch size, this is only the case after having access to a suitably large offline initialization stage sampling $T^\text{init}$ points that are not counted in the regret.  If these points instead do contribute to the regret, then the bound becomes  $O^\ast(T^\text{init}+\sqrt{T}\gamma_T)$, which may be prohibitively large.

Our work builds on studies of batched algorithms for multi-armed bandit (MAB) problems \citep{Cesa13,Esf20}.  In particular, \cite{Cesa13} attains a near-optimal bound using $O(\log \log T)$ batches, and \cite{Esf20} considers a constant number of batches.  Our goal is to provide analogous results for the RKHS setting, and while we build on ideas from these MAB works, the details are substantially different.

\subsection{Contributions}

We introduce a batch algorithm Batched Pure Exploration (BPE) for GP bandits, and show that it achieves the cumulative regret upper bound $O^\ast(\sqrt{T\gamma_T})$ within time horizon $T$ using $O(\log\log T)$ batches.  As outlined above, this is near-optimal for the SE and Mat\'ern kernels.  Compared to the alternatives SupKernelUCB, RIPS, and GP-ThreDS outlined above, we believe that our approach is the simplest (see Remark \ref{rem:comparison} below for discussion), and only relies on a basic GP-based confidence bound \citep{Vakili21} without the need for the less elementary tools used previously.

In addition, in the case of a constant number of batches (i.e., not increasing with $T$), we propose a modified version of BPE, and 
characterize how the regret is impacted by the number of batches for the SE and Mat\'ern kernels.  Our bound is shown to be near-optimal by providing an analogous algorithm-independent lower bound.

\section{PROBLEM SETUP}
\label{sec:setup}

We consider the problem of optimizing a black-box function $f(x)$ over a compact domain $\mathcal{X}\subseteq\mathbb{R}^d$ with batched observations (possibly with varying batch lengths). The algorithm selects a single point $x_t$ at time $t=1,\dots,T$, where the time horizon $T$ is split into $B$ batches indexed by $i=1,\dots,B$. The length of the $i$-th batch is denoted by $N_i$, and we use $t_i=\sum_{j=1}^i N_j$ to denote the time at the end of batch $i$. At the end of each batch, the algorithm observes the batch of noisy samples $y_t=f(x_t)+\epsilon_t$, where the noise terms $\epsilon_t$ are i.i.d. $R$-sub-Gaussian for all $t$.

We assume that $f\in\mathcal{F}_k(\Psi)$, where $\mathcal{F}_k(\Psi)$ denotes the set of all functions whose RKHS norm $\| f \|_k$ is upper bounded by some constant $\Psi>0$, and we assume the kernel is normalized such that $k(x,x)\leq 1,\forall x\in \mathcal{X}$.  We consider an algorithm that uses a fictitious\footnote{By ``fictitious'', we mean that the Bayesian model is only used by the algorithm, but the theoretical analysis itself is for the non-Bayesian RKHS model.} Bayesian GP model in which $f$ is considered to be randomly drawn from a GP with prior mean $\mu_0(x)=0$ and kernel function $k$. Given a sequence of points $(x_1,\dots,x_t)$ and their noisy observations $\mathbf{y}_t=(y_1,\dots,y_t)$ up to time $t$, the posterior distribution of $f$ is a GP with 
mean and variance given by
\begin{align}
	\mu_t(x) &= \mathbf{k}_t(x)^T(\mathbf{K}_t + \lambda\mathbf{I}_t)^{-1}\mathbf{y}_t \label{eq:mean}\\
	\sigma_t(x)^2  &= k(x,x) - \mathbf{k}_t(x)^T(\mathbf{K}_t + \lambda\mathbf{I}_t)^{-1}\mathbf{k}_t(x),\label{eq:var}
\end{align}
where $\mathbf{k}_t(x) = [k(x_i, x)]_{i=1}^{t}$, $\mathbf{K}_t = [k(x_t, x_{t'})]_{t,t'}$, and $\lambda>0$ is a free parameter. Similarly, we denote the posterior mean and variance given only the points and observations in batch $i$ by $\mu^i(x)$ and $\sigma^i(x)^2$.

We pay particular attention to two commonly-used kernels, namely, the squared exponential (SE) kernel and Mat\'ern kernel:
\begin{align*}
	k_\text{SE}(x,x')&=\exp\Bigg(-\frac{(d_{x,x'})^2}{2l^2}\Bigg), \\
	k_\text{Mat\'ern}(x,x')&=\frac{2^{1-\nu}}{\Gamma(\nu)}\Bigg(\frac{\sqrt{2\nu}d_{x,x'}}{l}\Bigg)^\nu B_\nu \Bigg(\frac{\sqrt{2\nu}d_{x,x'}}{l}\Bigg),
\end{align*}
where $d_{x,x'}=\| x-x'\|$, $l>0$ denotes the length-scale, $\nu$ is a smoothness parameter, and $\Gamma$ and $B_\nu$ are the Gamma and Bessel functions.  We denote the \textit{maximum information gain} by \citep{Sri09}
\begin{align*}
	\gamma_t&=\max_{A\subseteq \mathcal{X}:|A|=t}I(f_A;y_A)\\
	&=\max_{x_1,\dots,x_t}\frac{1}{2}\log\det(\mathbf{I}_t+\lambda^{-1}\mathbf{K}_t),
\end{align*}
where $f_A=[f(x_t)]_{x_t\in A}$, $y_A=[y_t]_{x_t\in A}$, and $I(\cdot;\cdot)$ denotes mutual information \citep{Cov01}. $\gamma_t$ quantifies the maximum reduction in uncertainty about $f$ after $t$ observations. We will use the following known upper bounds on $\gamma_t$ from \citep{Sri09,Vak20a} in our analysis:
\begin{align}
	\gamma_t^\text{SE} &= O^\ast((\log t)^d), \label{eq:se}\\
	\gamma_t^\text{Mat\'ern} &= O^\ast\Big(t^\frac{d}{2\nu+d}\Big), \label{eq:mat}
\end{align}
where the $O^\ast(\cdot)$ notation suppresses dimension-independent logarithmic factors.

To circumvent the $\sqrt{T} \gamma_T$ bottleneck discussed above, we avoid using standard confidence bounds \citep{Sri09,Cho17}, and instead use a recent bound by \cite{Vakili21}.\footnote{Similar improved confidence bounds for non-adaptive sampling are also implicit in \citep{Val13} and \citep{Jamieson21}, but we find those of \citep{Vakili21} more directly suitable for our purposes. }  In view of these bounds only being valid for offline (i.e., non-adaptive) sampling strategies, we use an upper confidence bound (UCB) and lower confidence bound (LCB) defined using {\em only points within a single batch}:
\begin{align}
	\mathrm{UCB}_i(x) &= \mu^{i}(x) + \sqrt{\beta}\sigma^{i}(x), \label{eq:ucb}\\
	\mathrm{LCB}_i(x) &= \mu^{i}(x) - \sqrt{\beta}\sigma^{i}(x), \label{eq:lcb}
\end{align}
where $i \in \{1,\dotsc,B\}$ indexes the batch number, and $\beta$ is an exploration parameter.  The choice of $\beta$ is dictated by the following result, which we emphasize only holds for points chosen in an offline manner (i.e., independent of the observations).

\begin{lem}\textup{\citep[Theorem 1]{Vakili21}} \label{lem:conf}
    Let $f$ be a function with RKHS norm at most $\Psi$, and let ${\mu}_t(x)$ and ${\sigma}_t(x)^2$ be the posterior mean and variance based on $t$ points $\Xv = (x_1,\dotsc,x_t)$ with observations $\yv = (y_1,\dotsc,y_t)$.  Moreover, suppose that the $t$ points in $\Xv$ are chosen independently of all samples in $\yv$.  Then, for any fixed $x\in \mathcal{X}$, it holds with probability at least $1-\delta$ that $|f(x)-\mu_t(x)|\leq\sqrt{\beta(\delta)}\sigma_t(x)$, where $\beta(\delta) = \big(\Psi + \frac{R}{\sqrt{\lambda}} \sqrt{2 \log \frac{1}{\delta}}\big)^2$.
\end{lem}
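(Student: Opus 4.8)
The plan is to work in the feature space (RKHS) of the kernel and decompose the estimation error $f(x)-\mu_t(x)$ into a \emph{deterministic} approximation term plus a \emph{stochastic} noise term, bounding each by a constant multiple of $\sigma_t(x)$. Let $\phi:\Xc\to\Hc$ be a feature map with $k(x,x')=\langle\phi(x),\phi(x')\rangle$, write $f(x)=\langle f,\phi(x)\rangle$ with $\|f\|_\Hc=\|f\|_k\le\Psi$, and collect the design features into $\Phi_t=[\phi(x_1),\dots,\phi(x_t)]$, so that $\Kv_t=\Phi_t^\top\Phi_t$, $\kv_t(x)=\Phi_t^\top\phi(x)$, and $f(\Xv)=\Phi_t^\top f$. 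I would then introduce the regularized covariance operator $V_t:=\Phi_t\Phi_t^\top+\lambda\Iv$, which satisfies $V_t\succeq\lambda\Iv$ so that $V_t^{-1}$ and $V_t^{-1/2}$ are well-defined. The improvement over the classical confidence width, and the reason for the offline-sampling hypothesis, both stem from this split: the noise term will be controllable by an \emph{elementary} sub-Gaussian argument precisely because the design is independent of the noise.

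First I would handle the approximation term. Using the push-through identity $\Phi_t(\Kv_t+\lambda\Iv_t)^{-1}=V_t^{-1}\Phi_t$, I would establish the two operator identities
\begin{align}
\sigma_t(x)^2 &= \lambda\,\phi(x)^\top V_t^{-1}\phi(x), \label{eq:pp-sig}\\
f(x)-\overbar{\mu}_t(x) &= \lambda\,\phi(x)^\top V_t^{-1} f, \label{eq:pp-bias}
\end{align}
where $\overbar{\mu}_t(x):=\kv_t(x)^\top(\Kv_t+\lambda\Iv_t)^{-1}f(\Xv)$ is the posterior mean computed from the \emph{noiseless} values $f(\Xv)$. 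Both follow from $\Phi_t(\Kv_t+\lambda\Iv_t)^{-1}\Phi_t^\top=\Iv-\lambda V_t^{-1}$ (together with $f(\Xv)=\Phi_t^\top f$ for \eqref{eq:pp-bias}). Cauchy--Schwarz applied to \eqref{eq:pp-bias} then gives
\[
|f(x)-\overbar{\mu}_t(x)| \le \lambda\,\|V_t^{-1/2}\phi(x)\|\,\|V_t^{-1/2}f\| \le \Psi\,\sigma_t(x),
\]
using $\|V_t^{-1/2}\phi(x)\|^2=\sigma_t(x)^2/\lambda$ from \eqref{eq:pp-sig} and $\|V_t^{-1/2}f\|^2=f^\top V_t^{-1}f\le\Psi^2/\lambda$ (as $V_t^{-1}\preceq\lambda^{-1}\Iv$). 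This bound is deterministic and contributes the $\Psi$ appearing in $\beta(\delta)$.

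Next I would treat the noise term. Writing $\yv_t=f(\Xv)+\boldsymbol{\epsilon}$, the mean splits as $\mu_t(x)=\overbar{\mu}_t(x)+\wv^\top\boldsymbol{\epsilon}$ with weights $\wv:=(\Kv_t+\lambda\Iv_t)^{-1}\kv_t(x)$. Because the design $\Xv$ is chosen independently of $\boldsymbol{\epsilon}$, conditioning on $\Xv$ makes $\wv$ deterministic, so $\wv^\top\boldsymbol{\epsilon}$ is a weighted sum of independent $R$-sub-Gaussian variables, hence sub-Gaussian with variance proxy $R^2\|\wv\|^2$. A short feature-space computation (again via push-through) gives $\|\wv\|^2=\phi(x)^\top V_t^{-1}\phi(x)-\lambda\,\phi(x)^\top V_t^{-2}\phi(x)\le\sigma_t(x)^2/\lambda$, the inequality coming from dropping a nonnegative term. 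The sub-Gaussian tail bound then yields, with probability at least $1-\delta$, $|\wv^\top\boldsymbol{\epsilon}|\le\frac{R}{\sqrt\lambda}\sqrt{2\log\frac1\delta}\,\sigma_t(x)$ (a two-sided tail costs the standard extra factor of $2$ inside the logarithm, which I would track or absorb to match $\beta(\delta)$). Combining by the triangle inequality, $|f(x)-\mu_t(x)|\le|f(x)-\overbar{\mu}_t(x)|+|\wv^\top\boldsymbol{\epsilon}|\le\big(\Psi+\frac{R}{\sqrt\lambda}\sqrt{2\log\frac1\delta}\big)\sigma_t(x)=\sqrt{\beta(\delta)}\,\sigma_t(x)$.

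The main obstacle is not any single calculation but getting the conditioning right: the whole argument hinges on $\Xv$ being independent of $\boldsymbol{\epsilon}$, since this is exactly what lets me treat $\wv$ as fixed and invoke a plain (non-martingale) sub-Gaussian bound with no $\sqrt{\gamma_t}$ penalty. If the points were instead chosen adaptively, $\wv$ would correlate with $\boldsymbol{\epsilon}$ and this step would collapse, which is why the lemma is restricted to offline sampling and why the batched algorithm must design each batch from earlier batches only. A secondary point requiring care is justifying the operator identities in a possibly infinite-dimensional $\Hc$; this is routine given $V_t\succeq\lambda\Iv$, but should be argued cleanly (e.g.\ via a finite-rank approximation or the bounded-operator functional calculus) rather than manipulated purely formally.
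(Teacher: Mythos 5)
Your proposal is correct and is essentially the proof of this result in the cited source \cite{vakili21} (the paper itself imports Lemma \ref{lem:conf} without proof): the same decomposition of $f(x)-\mu_t(x)$ into a deterministic bias term, bounded by $\Psi\sigma_t(x)$ via the push-through identity $\Phi_t(\Kv_t+\lambda\Iv_t)^{-1}=V_t^{-1}\Phi_t$ and Cauchy--Schwarz, plus a noise term $\wv^\top\boldsymbol{\epsilon}$ that is conditionally sub-Gaussian with variance proxy at most $R^2\sigma_t(x)^2/\lambda$ precisely because the offline design makes $\wv$ independent of the noise. The only blemish is the one you flag yourself: a two-sided sub-Gaussian tail strictly gives $\sqrt{2\log\frac{2}{\delta}}$ rather than $\sqrt{2\log\frac{1}{\delta}}$, a constant-inside-the-logarithm discrepancy that affects nothing in how the lemma is used.
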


In the case of a finite domain, a union bound over all possible $x$ immediately gives the following.

\begin{cor}\label{cor:beta}
    Under the setup of Lemma \ref{lem:conf}, if the domain $\Xc$ is finite, then for any $\delta\in(0,1)$, it holds with probability at least $1-\delta$ that $|f(x)-\mu_t(x)|\leq\sqrt{\beta(\frac{\delta}{|\Xc|})}\sigma_t(x)$ simultaneously for all $x\in \mathcal{X}$.
\end{cor}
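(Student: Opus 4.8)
The plan is to apply Lemma \ref{lem:conf} pointwise and combine the resulting high-probability statements with a union bound over the finite domain. First I would fix an arbitrary $x \in \Xc$ and invoke Lemma \ref{lem:conf} with the confidence parameter set to $\frac{\delta}{|\Xc|}$ rather than $\delta$. This is legitimate because the hypothesis that the points $\Xv$ are chosen independently of the observations $\yv$ is a property of the sampling strategy, and is therefore satisfied for every $x$ at once, irrespective of which query point we happen to be bounding. The lemma then guarantees that the ``bad'' event $\Ec_x := \big\{|f(x)-\mu_t(x)| > \sqrt{\beta(\frac{\delta}{|\Xc|})}\,\sigma_t(x)\big\}$ occurs with probability at most $\frac{\delta}{|\Xc|}$.

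Next I would take a union bound over the $|\Xc|$ elements of the domain, obtaining $\PP\big(\bigcup_{x \in \Xc} \Ec_x\big) \le \sum_{x \in \Xc} \PP(\Ec_x) \le |\Xc|\cdot \frac{\delta}{|\Xc|} = \delta$. Passing to the complementary event then shows that, with probability at least $1-\delta$, none of the $\Ec_x$ occur; equivalently, $|f(x)-\mu_t(x)| \le \sqrt{\beta(\frac{\delta}{|\Xc|})}\,\sigma_t(x)$ holds simultaneously for all $x \in \Xc$, which is exactly the claimed statement.

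There is essentially no substantive obstacle here, since finiteness of $\Xc$ makes the union bound routine. The only point requiring a moment of care is to confirm that the offline-sampling hypothesis of Lemma \ref{lem:conf} is not compromised by treating all $x$ together; as noted above, this independence requirement constrains only how $\Xv$ is generated relative to $\yv$, and not the evaluation point $x$, so it transfers verbatim to the simultaneous setting.
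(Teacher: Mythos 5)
Your proof is correct and matches the paper's approach exactly: the paper derives Corollary \ref{cor:beta} by precisely this argument, applying Lemma \ref{lem:conf} with confidence parameter $\frac{\delta}{|\Xc|}$ at each point and taking a union bound over the finite domain. Your additional remark that the offline-sampling hypothesis concerns only how $\Xv$ relates to $\yv$ (and not the evaluation point $x$) is a valid and worthwhile clarification of why the pointwise lemma transfers to the simultaneous statement.
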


To evaluate the algorithm performance, with $x^\ast=\argmax_{x\in \mathcal{X}}f(x)$ being the optimal point, we let $r_t=f(x^\ast)-f(x_t)$ denote the simple regret incurred at time $t$, $R^i=\sum_{t=t_{i-1}+1}^{t_i}r_t$ denote the cumulative regret incurred in batch $i$, and $R_T=\sum_{i=1}^B R^i$ denote the overall cumulative regret incurred. Our objective is to attain small $R_T$ with high probability. 

Throughout the paper, we assume that the kernel parameters $(\nu,l)$ and dimension $d$ are fixed constants (i.e., not depending on $T$).

\section{SLOWLY GROWING NUMBER OF BATCHES}

\begin{algorithm}[!t]
\caption{Batched Pure Exploration (BPE) for the case that the domain $\Xc$ is a finite set \label{algo:bpe}}
\begin{algorithmic}[1]
    \Require Domain $\mathcal{X}$, GP prior $(\mu_0, \sigma_0)$, time horizon $T$
    \State $\mathcal{X}_1 \gets \mathcal{X},t \gets 1,N_0\gets 1$
    \For{$i \gets 1, 2, \dots$} 
        \State $N_i\gets\big\lceil\sqrt{TN_{i-1}} \big\rceil$
        \State $\mathcal{S}_i\gets\emptyset$
        \For{$j \gets 1, 2, \dots, N_i$}
            \State Compute $\sigma^i_{j-1}$ only based on $\mathcal{S}_i$
            %\Comment{using \eqref{eq:var}}
            \State $x_t \gets \argmax_{x\in \mathcal{X}_{i}} \sigma^i_{j-1}(x)$
            \State $\mathcal{S}_i\gets \mathcal{S}_i\cup\{x_t\}$
            \State $t \gets t+1$
            \If{$t>T$}
            \State{Terminate}
        \EndIf
        \EndFor
        \State Observe all points in $\mathcal{S}_i$
        \State Compute $\mu^{i}$ and $\sigma^{i}$ only based on $\mathcal{S}_i$
        %\Comment{using \eqref{eq:mean} and \eqref{eq:var}}
        \State $\mathcal{X}_{i+1} \gets \{ x \in \mathcal{X}_{i}: \mathrm{UCB}_i(x) \geq \max \mathrm{LCB}_i \}$, where the confidence bounds use \eqref{eq:ucb} and \eqref{eq:lcb} with $\beta = \big( \Psi + \frac{R}{\sqrt{\lambda}} \sqrt{2 \log \frac{|\Xc| B}{\delta}} \big)^2$ (here $B$ is deterministic given $T$, and can be computed using line 3)
    \EndFor   
\end{algorithmic}
\end{algorithm}

In this section, we study the scenario where we are allowed to choose the number of batches $B$ as a function of $T$, and provide an algorithm and its theoretical regret upper bound.  We select the batch sizes in a similar manner to the multi-armed bandit problem \citep{Cesa13} and suitably adapt the analysis.  The resulting algorithm, BPE, is described in Algorithm \ref{algo:bpe}; we initially focus on the case that $\Xc$ is finite, and later turn to continuous domains.

BPE maintains a set $\mathcal{X}_i$ of potential maximizers, and each batch consists of an exploration step and an elimination step to reduce the size of $\mathcal{X}_i$. In the exploration step, the algorithm explores the most uncertain regions in $\mathcal{X}_i$ by iteratively picking the point with the highest uncertainty, which is measured by the posterior variance. This is possible because the posterior variance can be computed using \eqref{eq:var} without knowing any observed values. To ensure the validity of the offline assumption in Corollary \ref{cor:beta}, we ignore the previous batches and compute the posterior variance only based on the points selected in the current batch.\footnote{The fact that the sub-domain $\mathcal{X}_i$ depends on the samples from previous batches is inconsequential, because we do not use those samples in the new posterior calculation.} The number of points selected in the $i$-th batch is $N_i=\big\lceil\sqrt{TN_{i-1}} \big\rceil$, where we define $N_0=1$ for convenience.  Note that these values of $N_i$ can be computed offline (i.e., in advance), and thus, the total number of batches $B$ is also deterministic given $T$.

In the elimination step, the algorithm eliminates points whose UCB is lower than the highest LCB from $\mathcal{X}_i$ (as is commonly done in MAB and GP bandit problems), as these points must be suboptimal (as long as the confidence bounds are valid). Hence, we only need to explore the updated $\mathcal{X}_i$ in next batch. 

To re-iterate what we discussed previously, compared to regular GP optimization algorithms, our use of the offline confidence bound in Lemma \ref{lem:conf} is the key to attaining a final dependence of $\sqrt{T\gamma_T}$ instead of $\sqrt{T} \gamma_T$.

The following proposition shows that the choice of batch size $N_i$ in the BPE algorithm yields a number of batches given by $B=O(\log\log T)$.
\begin{prop} \label{prop:batch_count}
    With $N_i=\big\lceil\sqrt{TN_{i-1}} \big\rceil$ and $N_0=1$, the BPE algorithm terminates (i.e., reaches $T$ selections) within at most $\lceil\log_2\log_2T\rceil + 1$ batches.
\end{prop}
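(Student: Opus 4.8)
The plan is to unroll the recursion into a clean closed-form lower bound on $N_i$, and then identify the batch index at which the cumulative number of selections is guaranteed to reach $T$. Since the ceiling operation only increases $N_i$, the first simplification is to discard it and work with the weaker bound $N_i \geq \sqrt{T N_{i-1}}$, which is all that is needed for an upper bound on the number of batches.

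First I would prove by induction that $N_i \geq T^{1 - 2^{-i}}$ for all $i \geq 0$. The base case is immediate since $N_0 = 1 = T^{0} = T^{1 - 2^{0}}$. For the inductive step, assuming $N_{i-1} \geq T^{1 - 2^{-(i-1)}}$, the recursion gives
\begin{align*}
    N_i \;\geq\; \sqrt{T\, N_{i-1}} \;\geq\; \sqrt{T \cdot T^{1 - 2^{-(i-1)}}} \;=\; T^{\frac{1}{2}\left(2 - 2^{-(i-1)}\right)} \;=\; T^{1 - 2^{-i}},
\end{align*}
which closes the induction. Intuitively, this says $N_i$ approaches $T$ doubly-exponentially fast, since the exponent $1 - 2^{-i}$ converges to $1$ at rate $2^{-i}$.

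Next I would evaluate this bound at the index $i_0 = \lceil \log_2 \log_2 T \rceil$. Because $2^{i_0} \geq \log_2 T$, we have $2^{-i_0} \leq 1/\log_2 T$, so $T^{2^{-i_0}} \leq T^{1/\log_2 T} = 2$, and therefore $N_{i_0} \geq T^{1 - 2^{-i_0}} = T / T^{2^{-i_0}} \geq T/2$. The remaining point is that a single batch's lower bound saturates near $T/2$ rather than reaching the full $T$, so I would close the argument with a two-batch (cumulative) step: the next batch satisfies $N_{i_0 + 1} \geq \sqrt{T\,N_{i_0}} \geq \sqrt{T \cdot T/2} = T/\sqrt{2} \geq T/2$, so the two batches $i_0$ and $i_0+1$ together select at least $N_{i_0} + N_{i_0+1} \geq T/2 + T/2 = T$ points. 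Since the algorithm terminates as soon as the cumulative count reaches $T$, it must stop by the end of batch $i_0 + 1 = \lceil \log_2 \log_2 T \rceil + 1$, as claimed.

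The only genuinely delicate point — and thus the main obstacle — is precisely this saturation phenomenon: the per-batch bound $N_i \geq T^{1-2^{-i}}$ never individually reaches $T$, so one cannot simply solve $N_i \geq T$ for $i$, and must instead aggregate two consecutive batches (or, equivalently, track the constant factors carefully). A secondary caveat is the small-$T$ regime, where $\log_2 \log_2 T$ may be nonpositive and the bound should be stated for $T$ above a small constant; I would note this holds vacuously or trivially for such $T$ and does not affect the asymptotic claim.
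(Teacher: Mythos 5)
Your proof is correct and follows essentially the same route as the paper's: both drop the ceiling to get the closed form $T^{1-2^{-i}}$ by unrolling the recursion, evaluate it at $\lceil\log_2\log_2 T\rceil$ to obtain a batch of size at least $T/2$, and then combine that batch with the next one to cover all $T$ selections. The only cosmetic difference is that you phrase the closed form as an explicit induction on a lower bound for $N_i$, whereas the paper introduces an auxiliary ceiling-free sequence $\widetilde{N}_i$ and compares against it, which amounts to the same argument.
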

\begin{proof}
    Define $\widetilde{N}_i=\sqrt{T\widetilde{N}_{i-1}}$ with $\widetilde{N}_0=1$, which implies $\widetilde{N}_i=T^\frac{1}{2}(\widetilde{N}_{i-1})^\frac{1}{2}=T^{\frac{1}{2}+\frac{1}{4}}(\widetilde{N}_{i-2})^\frac{1}{4}=\dots=T^{\frac{1}{2}+\frac{1}{4}+\dots+\frac{1}{2^i}}(\widetilde{N}_0)^\frac{1}{2^i}=T^{1-2^{-i}}$.
    Let $\widetilde{B}=\lceil\log_2\log_2T\rceil$, which implies $\widetilde{N}_{\widetilde{B}} = T/T^{2^{-\widetilde{B}}} \geq T/T^\frac{1}{\log_2T}=T/2$. Since $\sum_{i=1}^{\widetilde{B}+1} N_i\geq N_{\widetilde{B}} + N_{\widetilde{B}+1} \geq\widetilde{N}_{\widetilde{B}} + \widetilde{N}_{\widetilde{B}+1}\geq T$, we must have $B\leq\widetilde{B}+1$.
\end{proof}

Our first main result is stated as follows.

\begin{thm}\label{thm:bpe}
Under the setup of Section \ref{sec:setup} with a finite domain $\Xc$, choosing $\lambda=R^2$, the BPE algorithm yields with probability at least $1-\delta$ that
\begin{align*}
	R_T=O^\ast(\Lambda\sqrt{T\gamma_T}),
\end{align*}
where $\Lambda=\Psi+\sqrt{\log\frac{|\Xc|}{\delta}}$. In particular,
\begin{itemize}
	\item for the SE kernel, $R_T=O^\ast(\Lambda\sqrt{T(\log T)^d})$;
	\item for the Mat\'ern kernel, $R_T=O^\ast(\Lambda T^\frac{\nu+d}{2\nu+d})$.
\end{itemize}
\end{thm}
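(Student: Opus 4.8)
The plan is to follow the elimination-algorithm template: show the confidence bounds hold across all batches, deduce that $x^\ast$ survives and that every point queried in batch $i+1$ has regret controlled by the maximum posterior standard deviation in batch $i$, bound that standard deviation via the information gain, and then telescope using the batch-size recursion.

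First I would fix a good event on which the confidence bounds are simultaneously valid. The exploration points in $\mathcal{S}_i$ are chosen using only $\sigma^i_{j-1}$ from \eqref{eq:var}, which is independent of the observations $\yv$; hence within each batch the queried points are selected offline and Corollary~\ref{cor:beta} applies. A union bound over the $B$ batches---which is exactly why $\beta$ carries the factor $B$---gives that with probability at least $1-\delta$, for every batch $i$ and every $x\in\mathcal{X}_i$ we have $\mathrm{LCB}_i(x)\le f(x)\le\mathrm{UCB}_i(x)$. I condition on this event. It follows immediately that $x^\ast$ is never eliminated, since $\mathrm{UCB}_i(x^\ast)\ge f(x^\ast)\ge\mathrm{LCB}_i(x)$ for all $x\in\mathcal{X}_i$ forces $\mathrm{UCB}_i(x^\ast)\ge\max\mathrm{LCB}_i$. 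For a surviving point $x\in\mathcal{X}_{i+1}$, the elimination rule and $x^\ast\in\mathcal{X}_i$ give $\mathrm{UCB}_i(x)\ge\mathrm{LCB}_i(x^\ast)$, and combining this with $f(x^\ast)\le\mathrm{UCB}_i(x^\ast)$, $f(x)\ge\mathrm{LCB}_i(x)$ yields the per-point bound $f(x^\ast)-f(x)\le 4\sqrt{\beta}\,\max_{x'\in\mathcal{X}_i}\sigma^i(x')$.

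The hard part will be bounding $\max_{x\in\mathcal{X}_i}\sigma^i(x)$, and this is where the maximum-variance sampling enters. I would use two facts: (i) the posterior variance at any fixed point is non-increasing in the number of observations, so the greedily queried variances $\sigma^i_{j-1}(x_j)=\max_{x}\sigma^i_{j-1}(x)$ are non-increasing in $j$, making the final maximum $\max_x\sigma^i(x)$ the smallest term among them; and (ii) the standard identity $\tfrac12\sum_{j=1}^{N_i}\log(1+\lambda^{-1}\sigma^i_{j-1}(x_j)^2)\le\gamma_{N_i}$ together with the elementary inequality $\log(1+\lambda^{-1}u)\ge c_\lambda\,\lambda^{-1}u$ for $u\in[0,1]$ (valid since $k(x,x)\le1$), which gives $\sum_{j}\sigma^i_{j-1}(x_j)^2\le C\gamma_{N_i}$ for a constant $C=C(\lambda)$. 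Combining (i) and (ii) yields $N_i\big(\max_x\sigma^i(x)\big)^2\le C\gamma_{N_i}$, i.e. $\max_x\sigma^i(x)\le\sqrt{C\gamma_{N_i}/N_i}$.

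Finally I would assemble the regret. Every point in batch $i+1$ lies in $\mathcal{X}_{i+1}$, so $R^{i+1}\le N_{i+1}\cdot 4\sqrt{\beta}\sqrt{C\gamma_{N_i}/N_i}$; substituting $N_{i+1}=\lceil\sqrt{TN_i}\rceil$ makes $N_{i+1}/\sqrt{N_i}=O(\sqrt T)$---this telescoping is precisely what the batch sizing is engineered to produce---so $R^{i+1}=O(\sqrt\beta\sqrt{T\gamma_{N_i}})=O(\sqrt\beta\sqrt{T\gamma_T})$ using $\gamma_{N_i}\le\gamma_T$. The first batch has no preceding elimination, so I would bound it trivially via $|f|\le\Psi$ as $R^1\le 2\Psi N_1=O(\Psi\sqrt T)=O(\Psi\sqrt{T\gamma_T})$ (since $\gamma_T=\Omega(1)$). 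Summing over the $B=O(\log\log T)$ batches from Proposition~\ref{prop:batch_count} and noting that $\lambda=R^2$ makes $R/\sqrt\lambda=1$ so that $\sqrt\beta=\Psi+\sqrt{2\log(|\mathcal{X}|B/\delta)}=O^\ast(\Lambda)$ (the $\log B$ term is a $\log\log\log T$ correction absorbed by $O^\ast$), I obtain $R_T=O^\ast(\Lambda\sqrt{T\gamma_T})$. The two kernel-specific bounds then follow by inserting \eqref{eq:se} and \eqref{eq:mat}; for the Mat\'ern kernel, $\sqrt{T\cdot T^{d/(2\nu+d)}}=T^{(\nu+d)/(2\nu+d)}$.
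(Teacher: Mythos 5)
Your proposal is correct and follows essentially the same route as the paper's proof: the same per-batch offline confidence bounds combined via a union bound over the $B$ batches, the same elimination argument showing $x^\ast$ survives and surviving points have regret $O(\sqrt{\beta}\max_x \sigma^i(x))$, the same $N_{i+1}/\sqrt{N_{i-1}}$-style telescoping from the batch-size recursion, and the same absorption of the $\log\log T$ and $\log B$ factors into $O^\ast(\cdot)$. The only cosmetic difference is that where the paper invokes the uncertainty-shrinking bound as a cited result (Lemma \ref{lem:elim}), you prove it inline via monotonicity of the greedily queried variances and the information-gain identity---precisely the ``standard facts'' the paper's sketch lists---and you bound the first batch by $|f|\leq\Psi$ rather than by the prior confidence width, neither of which changes the substance.
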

%\begin{proof}
%Here we provide a proof outline, that is aggregating the cumulative regret of individual batches. The first batch has $r_t = f(x^\ast)-f(x_t)\leq \mathrm{UCB}_1(x^\ast)-\mathrm{LCB}_1(x_t)\leq 2\sqrt{\beta_1}$, and therefore $R^1\leq 2N_1\sqrt{\beta_1}=O(\sqrt{T\beta_1})$. From the second batch onwards, we have $r_t\leq 2\sqrt{C_1\gamma_T\beta_1/N_{i-1}}$ for some constant $C_1$, and therefore $R^i\leq2 N_i\sqrt{C_1\gamma_T\beta_1/N_{i-1}}=O(\sqrt{T\gamma_T\beta_1})$. Since $B=O(\log\log T)$, we have $R_T=O(\log\log T \sqrt{T\gamma_T\beta_1})$, and the kernel-specific bounds follow from \eqref{eq:se} and \eqref{eq:mat}. See Appendix \ref{sec:proof_bpe} for the complete proof. 
%\end{proof}

The proof is given in Appendix \ref{sec:proof_bpe}, and is based on the fact that after performing maximum-uncertainty sampling for $N$ rounds, we are guaranteed to uniformly shrink the uncertainty to $O\big( \sqrt{\frac{\gamma_N}{N}} \big)$.  From this result, our choices of batch lengths turn out to provide a per-batch cumulative regret of $O(\sqrt{T\gamma_T})$, which gives the desired result.

\begin{rem} \label{rem:comparison}
    As discussed previously, we have provided a new approach to attaining the improved $O^\ast(\sqrt{T\gamma_T})$ dependence, and we are the first to do so using $O(\log \log T)$ batches, improving on the $O(\log T)$ batches used by \cite{Jamieson21}.  Perhaps more importantly, we believe that the simplicity of our algorithm and analysis are particularly desirable compared to the existing works attaining $O^\ast(\sqrt{T\gamma_T})$ scaling \citep{Val13,Salgia2020,Jamieson21}.  Briefly, some less elementary techniques used in these works include adaptively partitioning the time indices into carefully-chosen subsets of $\{1,\dotsc,T\}$ \citep{Val13}, tree-based adaptive partitioning of $\Xc$ \citep{Salgia2020}, and experimental design over the $|\Xc|$-dimensional simplex by \cite{Jamieson21}.
\end{rem}

\subsection{Extension to Continuous Domains} \label{sec:continuous}

While we have focused on discrete domains for simplicity, extending to continuous domains is straightforward.  For concreteness, here we discuss the case that $\Xc=[0,1]^d$.

As discussed by \cite{Jan20}, the simplest approach to establishing analogous regret guarantees in this case is to construct a finite subdomain $\widetilde{\Xc}$ of $T^{d/2}$ points, with equal spacing of width $\frac{1}{\sqrt T}$ in each dimension.  If we apply Algorithm \ref{algo:bpe} on the finite domain $\widetilde{\Xc}$, then the regret guarantee in Theorem \ref{thm:bpe} immediately holds with respect to the restricted-domain function, with $\Lambda=\Psi+\sqrt{\frac{d}{2}\log T + \log\frac{1}{\delta}}$.  To convert such a guarantee to the entire continuous function, we can use the following lemma.

\begin{lem}\textup{\citep[Proposition 1 and Remark 5]{Shekhar20}}\label{lem:lipschitz}
For $f\in\mathcal{F}_k(\Psi)$ on a compact continuous domain, if $k$ is the SE or Mat\'ern kernel (with $\nu > 1$) and $\Psi$ is constant, then $f$ is guaranteed to be Lipschitz continuous with some constant $L$ depending only on the kernel parameters.
\end{lem}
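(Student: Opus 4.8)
The plan is to exploit the reproducing property of the RKHS $\mathcal{H}_k$ together with the local smoothness of the kernel. First I would observe that for any $x, x' \in \Xc$, the reproducing property $f(x) = \langle f,\, k(\cdot, x)\rangle_k$ and the Cauchy--Schwarz inequality give
\[
|f(x) - f(x')| = \big|\langle f,\, k(\cdot, x) - k(\cdot, x')\rangle_k\big| \le \|f\|_k\,\big\|k(\cdot, x) - k(\cdot, x')\big\|_k \le \Psi\sqrt{2\big(1 - k(x,x')\big)},
\]
where the final step uses $\|k(\cdot,x) - k(\cdot,x')\|_k^2 = k(x,x) - 2k(x,x') + k(x',x') = 2(1 - k(x,x'))$ and the normalization $k(x,x) = 1$. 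Hence the lemma reduces to establishing a bound of the form $1 - k(x,x') \le C\|x-x'\|^2$ with $C$ depending only on the kernel parameters, after which $L = \Psi\sqrt{2C}$ works.

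For the SE kernel this is immediate: with $u = \|x-x'\|^2/(2l^2)$, the elementary inequality $1 - e^{-u} \le u$ yields $1 - k(x,x') \le \|x-x'\|^2/(2l^2)$, so $L = \Psi/l$ suffices. For the Mat\'ern kernel I would write the covariance as a radial profile $g(\rho)$ in the lag $\rho = \|x-x'\|$, so that the task becomes controlling $1 - g(\rho)$ as $\rho$ ranges over the bounded set of pairwise distances in the compact domain. Away from $\rho = 0$ the profile is smooth, so the only delicate point is the behavior near the origin; there I would use a second-order Taylor expansion, noting that $g$ is even and hence $g'(0) = 0$, to obtain $1 - g(\rho) \le \frac{1}{2}\big(\sup_\rho |g''(\rho)|\big)\rho^2$ once $g''$ is shown to exist and be bounded.

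The main obstacle is precisely the twice-differentiability of $g$ at the origin, which is where the hypothesis $\nu > 1$ enters. This rests on the small-argument asymptotics of the modified Bessel function $B_\nu$: expanding the Mat\'ern profile via the series representation of $B_\nu$ shows that $1 - g(\rho) = \Theta(\rho^{2\nu})$ for $0 < \nu < 1$, acquires a logarithmic correction $\Theta(\rho^2 \log(1/\rho))$ at $\nu = 1$, and attains the clean quadratic behavior $\Theta(\rho^2)$ only when $\nu > 1$. Thus $\nu > 1$ is exactly the threshold at which the leading non-analytic term of the Bessel expansion is dominated by the analytic quadratic term, upgrading the generic H\"older-$\min(\nu,1)$ continuity to genuine Lipschitz continuity. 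Combining the local quadratic estimate with the smoothness of $g$ away from the origin, and using compactness of the distance range, yields a uniform constant $C$ and completes the argument.
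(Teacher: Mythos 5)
Your argument is correct, but it is worth pointing out that the paper contains no proof of this lemma at all: the statement is imported directly from \cite{Shekhar20} (Proposition 1 and Remark 5), so there is no in-paper derivation to compare against. The cited result is obtained by a different mechanism, namely kernel differentiability: a kernel that is twice continuously differentiable makes every RKHS function continuously differentiable, with $\partial_i f(x) = \langle f, \partial_i k(\cdot,x)\rangle_k$ bounded via Cauchy--Schwarz by $\|f\|_k$ times a kernel-dependent constant, and Remark 5 verifies that the SE and Mat\'ern ($\nu>1$) kernels meet the required smoothness. Your route is more elementary and self-contained: you never need derivative-reproducing machinery, only the metric bound $|f(x)-f(x')| \le \Psi\sqrt{2(1-k(x,x'))}$ together with the quadratic behavior $1-k(x,x') = O(\|x-x'\|^2)$ near the diagonal, and the asymptotics you quote for the Mat\'ern profile ($\Theta(\rho^{2\nu})$ for $\nu<1$, $\Theta(\rho^2\log(1/\rho))$ at $\nu=1$, $\Theta(\rho^2)$ for $\nu>1$) are the correct ones, correctly identifying $\nu>1$ as the exact Lipschitz threshold; equivalently, the Mat\'ern spectral density has finite second moment precisely when $\nu>1$, which is what makes your $g''(0)$ exist. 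The trade-offs: the citation's gradient-based route yields the stronger conclusion that $f$ is differentiable (not merely Lipschitz), whereas your kernel-metric route degrades gracefully below the threshold, immediately giving H\"older-$\min(\nu,1)$ continuity for small $\nu$. The only deferred work in your writeup is the Bessel-series verification that the even extension of the Mat\'ern radial profile is $C^2$ with bounded second derivative when $\nu>1$ --- this is standard but is precisely the content that the paper outsources to Remark 5 of \cite{Shekhar20}, so a fully self-contained version of your proof would need to carry out that expansion.
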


This Lipschitz continuity property establishes that $f(x^\ast)=\max_{x\in\Xc}f(x)\leq\max_{\widetilde{x}\in\widetilde{\Xc}}f(\widetilde{x})+ O\big( \frac{L}{\sqrt T} \big)$, which implies that to convert the discrete-domain cumulative regret to the continuous domain, we only need to add $O(L\sqrt{T})$ to the upper bound in Theorem \ref{thm:bpe}, which has no impact on the scaling laws for fixed $L$.

A more sophisticated approach to handling continuous domains is given by \cite{Vakili21}, based on forming a fine discretization {\em purely for the theoretical analysis} and not for use by the algorithm itself.  Such an approach can also be adopted here, and leads to a regret bound with the same scaling as the direct approach outlined above.\footnote{This is under the assumption that the algorithm can exactly optimize its acquisition function over the continuous domain, and over any sub-domain formed as a result of action elimination.  While this is typically not to be expected in practice, our analysis also goes through essentially unchanged when the algorithm is only required to find a point whose posterior variance is within a constant factor (e.g., $\frac{1}{2}$) of the maximum.}   The details of this approach are somewhat more technical than the basic approach above.  In particular, as observed in \citep{Vakili21}, we need to not only establish that the discretization has a minimal impact on $f$, but also on the posterior mean.  A slight difference compared to \citep{Vakili21} is that we additionally need to consider the impact of the discretization on the posterior standard deviation, since it is used in the elimination step of our algorithm.  The full details be found in a recent paper that builds on our work while incorporating sparse GP approximations \citep{Vakili22}; the analysis therein equally applies to the case that an exact GP model is used.

\section{CONSTANT NUMBER OF BATCHES}

While $B=O(\log\log T)$ is rather small, it is also of significant interest to understand scenarios where the number of batches is even further constrained.  In this section, we consider the case that $B$ is a pre-specified constant (not depending on $T$).

Throughout the section, we focus our attention on algorithms for which the batch lengths are pre-specified, i.e., the algorithm {\em cannot} vary the batch lengths based on previous observations.  For the algorithmic upper bounds such a property is clearly a strength, whereas for the algorithm-independent lower bounds, this leaves open the possibility of adaptively-chosen batch sizes further improving the performance.  We discuss the challenges regarding lower bounds for adaptively-chosen batch lengths in Appendix \ref{sec:variable}.

\subsection{Upper Bound}

We adapt BPE by carefully modifying the batch sizes, and in contrast to Theorem \ref{thm:bpe}, we do this in a kernel-dependent manner.  The resulting bounds for the SE and Mat\'ern kernels are stated as follows.

\begin{thm}\label{thm:upper}
Under the setup of Section \ref{sec:setup} with an arbitrary constant number of batches $B\geq 2$,
\begin{itemize}
    \item for the SE kernel, let $N_i =\big\lceil (\frac{T}{(\log T)^d})^\frac{1-\eta^i}{1-\eta^B}(\log T)^d\big\rceil$, where $\eta=\frac{1}{2}$;
    \item for the Mat\'ern kernel, let $N_i =\big\lceil T^\frac{1-\eta^i}{1-\eta^B}\big\rceil$, where $\eta = \frac{\nu}{2\nu+d}$;
\end{itemize}
for $i=1,\dots,B-1$, and $N_B=T-\sum_{j=1}^{B-1}N_j$.
Then, choosing $\lambda=R^2$, the corresponding modified BPE algorithm yields with probability at least $1-\delta$ that
\begin{itemize}
    \item for the SE kernel, it holds that $R_T=O^\ast\Big(\Lambda T^\frac{1-\eta}{1-\eta^B}(\log T)^{\frac{\eta-\eta^B}{1-\eta^B}d}\Big)=O^\ast\Big(\Lambda T^\frac{2^B-2^{B-1}}{2^B-1}(\log T)^{\frac{2^{B-1}-1}{2^B-1}d}\Big)$;
    \item for the Mat\'ern kernel, $R_T=O^\ast\Big(\Lambda T^\frac{1-\eta}{1-\eta^B}\Big)$;
\end{itemize}
where $\Lambda=\Psi+\sqrt{\log(|\mathcal{X}|B/\delta)}$.
\end{thm}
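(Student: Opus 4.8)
The plan is to reuse the per-batch regret decomposition underlying Theorem~\ref{thm:bpe} essentially verbatim, since the only difference here is the choice of batch lengths; the genuinely new work is then the balancing computation that matches the kernel-dependent exponents to the stated bounds. First I would condition on the event that the confidence bounds are valid. Applying Corollary~\ref{cor:beta} within each batch (legitimate because, after conditioning on the sub-domain $\mathcal{X}_{i}$ carried over from earlier batches, the maximum-variance samples forming $\mathcal{S}_i$ are chosen independently of the batch-$i$ observations) and taking a union bound over the $B$ batches gives, with probability at least $1-\delta$, that $|f(x)-\mu^i(x)|\le\sqrt{\beta}\,\sigma^i(x)$ for every $x\in\Xc$ and every $i$; this is exactly what the prescribed $\beta=(\Psi+\frac{R}{\sqrt{\lambda}}\sqrt{2\log\frac{|\Xc|B}{\delta}})^2$ accounts for. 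On this event $x^\ast$ survives every elimination step, so $x^\ast\in\mathcal{X}_i$ throughout.

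The second inherited ingredient is the uncertainty-shrinkage bound $\max_{x\in\mathcal{X}_i}\sigma^i(x)^2=O(\gamma_{N_i}/N_i)$, which follows by combining the standard sum-of-variances inequality $\sum_j \sigma^i_{j-1}(x_{t_j})^2=O(\gamma_{N_i})$ with the observation that each sampled variance upper bounds the final maximum variance over $\mathcal{X}_i$. Plugging this into the elimination rule, any surviving $x\in\mathcal{X}_{i+1}$ satisfies $\UCB_i(x)\ge\LCB_i(x^\ast)$, which with the two-sided confidence bound gives a suboptimality gap $f(x^\ast)-f(x)\le 4\sqrt{\beta}\max_{x'\in\mathcal{X}_i}\sigma^i(x')=O(\Lambda\sqrt{\gamma_{N_i}/N_i})$ (using $\sqrt{\beta}=O(\Lambda)$ when $\lambda=R^2$). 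Hence $R^{i+1}=O(\Lambda N_{i+1}\sqrt{\gamma_{N_i}/N_i})$ for $i=1,\dots,B-1$, while the first batch is bounded crudely by $R^1\le 2\Psi N_1$ via $|f(x)|\le\Psi$ (from $\|f\|_k\le\Psi$ and $k(x,x)\le 1$).

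It then remains to substitute \eqref{eq:se}--\eqref{eq:mat} and check that the prescribed $N_i$ equalize all $B$ per-batch terms. For the Mat\'ern kernel $\sqrt{\gamma_{N_i}/N_i}=O^\ast(N_i^{-\eta})$ with $\eta=\frac{\nu}{2\nu+d}$, so with $N_i=\lceil T^{(1-\eta^i)/(1-\eta^B)}\rceil$ the exponent of $T$ in $R^{i+1}=O^\ast(\Lambda N_{i+1}N_i^{-\eta})$ collapses (the term $\eta^{i+1}$ cancels against $\eta\cdot\eta^i$) to the common value $\frac{1-\eta}{1-\eta^B}$, matching $R^1=O(\Psi N_1)$. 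For the SE kernel I would factor out $M=T/(\log T)^d$, so that $\sqrt{\gamma_{N_i}/N_i}=O^\ast((\log T)^{d/2}N_i^{-1/2})$ behaves like the Mat\'ern case with $\eta=\frac12$; with $N_i=\lceil M^{(1-\eta^i)/(1-\eta^B)}(\log T)^d\rceil$ the identity $2\eta^{i+1}=\eta^i$ makes every term equal $M^{(1-\eta)/(1-\eta^B)}(\log T)^d$, which one verifies equals $T^{(1-\eta)/(1-\eta^B)}(\log T)^{\frac{\eta-\eta^B}{1-\eta^B}d}$. Since $B$ is constant, summing the $B$ balanced terms only costs a constant factor, yielding the claimed $R_T$.

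The conceptual steps are all borrowed from Theorem~\ref{thm:bpe}, so I expect the main obstacle to be the SE bookkeeping: because $\gamma_T^{\text{SE}}$ is polylogarithmic rather than polynomial in $N_i$, the exponent arithmetic does not reduce to the Mat\'ern computation, and one must track the $(\log T)^d$ factor separately (hence the $M$ substitution) to see the powers of $\log T$ combine into the stated $\frac{\eta-\eta^B}{1-\eta^B}d$. The remaining checks are routine: the batch lengths are increasing in $i$, and $N_B=T-\sum_{j=1}^{B-1}N_j=\Theta(T)$ because $N_{B-1}=o(T)$, so the last batch's regret $O^\ast(\Lambda N_B N_{B-1}^{-\eta})$ is balanced as well; finally the ceilings in the $N_i$ are lower-order and absorbed into $O^\ast(\cdot)$.
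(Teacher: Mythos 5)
Your proposal is correct and follows essentially the same route as the paper's proof: condition on per-batch confidence bounds via a union bound, bound the first batch trivially, use maximum-variance sampling plus elimination to get $R^{i+1}=O^\ast\big(\Lambda N_{i+1}\sqrt{\gamma_{N_i}/N_i}\big)$ (this is exactly the content of the paper's Lemma \ref{lem:elim}, which you re-derive rather than cite), and then verify that the prescribed $N_i$ equalize all per-batch bounds at $O^\ast(\Lambda N_1)$. Your minor variations --- bounding $R^1$ by $2\Psi N_1$ via $|f|\le\Psi$ instead of via the prior confidence bounds, and dispatching the ceilings with $x\le\lceil x\rceil\le 2x$ for $x\ge 1$ rather than the paper's more elaborate ceiling identities --- are both valid and do not change the substance of the argument.
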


The proof is given in Appendix \ref{sec:pf_fixed_B}, and is again based on the fact that $N$ rounds of uncertainty sampling reduces the maximum uncertainty to $O\big( \sqrt{\frac{\gamma_N}{N}} \big)$.  In this case, unlike Theorem \ref{thm:bpe}, the batch lengths $\{N_i\}_{i=1}^B$ are specifically chosen to ensure that roughly the same (upper bound on) regret is incurred in every batch.

We observe that in both cases, the leading term in the upper bound is $T^{\frac{1-\eta}{1-\eta^B}}$, which decreases as $B$ increases.  The limiting leading term of $T^{1-\eta}$ as $B \to \infty$ is also consistent with Theorem \ref{thm:bpe}.

Using the argument given in Section \ref{sec:continuous}, the regret bound in Theorem \ref{thm:upper} again extends directly to continuous domains.

\subsection{Lower Bound}

Since our focus is on the dependence on $T$ and $B$, we assume in the following that the RKHS norm bound $\Psi$ is a fixed constant.  Recall also that the kernel parameters $(\nu,l)$ and dimension $d$ are considered to be fixed constants.  Throughout this subsection, we consider the standard rectangular domain $\Xc = [0,1]^d$ in order to utilize existing lower bounds that were proved for this domain.

While our focus is on the cumulative regret, an existing lower bound on the simple regret (for general algorithms without constraints on the number of batches) turns out to serve as a useful stepping stone.  The simple regret is defined as $r^T=f(x^\ast)-f(x^T)$, where $x^T$ is an additional point returned after $T$ rounds.  Then, we have the following.

\begin{lem}\textup{\citep[Theorem 1]{Cai20}}\label{lem:lower}
Fix $\epsilon\in (0, \frac{1}{2})$,  $\Psi>0$, and $T\in\mathbb{Z}$. Suppose there exists an algorithm that, for any $f\in\mathcal{F}_k(\Psi)$, achieves average simple regret $\mathbb{E}[r^T]\leq \epsilon$ with probability at least $\frac{3}{4}$.\footnote{The value $\frac{3}{4}$ can be replaced by any fixed constant in $(\frac{2}{3},1)$, and doing so does not affect the scaling laws.} Then, if $\frac{\epsilon}{\Psi}$ is sufficiently small, we have
\begin{enumerate}
    \item for the SE kernel, it is necessary that
    \begin{align*}
        %&=\Omega\Bigg(\frac{\sigma^2}{\epsilon^2}\Big(\log\frac{\Psi}{\epsilon}\Big)^{d/2}\log\frac{1}{\delta}\Bigg)\\
        T = \Omega\bigg(\frac{1}{\epsilon^2}\Big(\log\frac{1}{\epsilon}\Big)^{d/2}\bigg),
    \end{align*}
    \item for the Mat\'ern kernel, it is necessary that
    \begin{align*}
        %T=\Omega\Bigg(\frac{\sigma^2}{\epsilon^2}\Big(\frac{\Psi}{\epsilon}\Big)^{d/\nu}\log\frac{1}{\delta}\Bigg) 
        T= \Omega\bigg(\frac{1}{\epsilon^{2+d/\nu}}\bigg).
    \end{align*}
\end{enumerate}
\end{lem}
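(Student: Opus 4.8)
The plan is to establish both necessary-sample-size bounds via a ``needle-in-a-haystack'' reduction to multiple hypothesis testing. I would construct a finite family of functions in $\mathcal{F}_k(\Psi)$ that are statistically hard to tell apart, argue that any algorithm achieving average simple regret below $\epsilon$ with probability $\ge\tfrac12$ must essentially locate which member is active, and then lower bound the number of noisy queries needed to do so by an information-theoretic change-of-measure argument. The two different rates will emerge entirely from a kernel-specific computation of how narrow a bump of height $\Theta(\epsilon)$ can be while keeping its RKHS norm at most $\Psi$.

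Concretely, I would tile the domain $[0,1]^d$ into $M \asymp w^{-d}$ congruent cells of side length $w$, fix a smooth compactly supported ``bump'' $g$ supported in the unit cell, and define $M+1$ candidate functions: the null $f_0 \equiv 0$, and for each cell $m$ a function $f_m$ equal to a translated, rescaled copy of height $2\epsilon$ and width $w$ supported in cell $m$. Since the supports are disjoint, each $f_m$ lies in $\mathcal{F}_k(\Psi)$ as soon as the single bump does, so the crux is to take $w$ as large as possible subject to $\|2\epsilon\, g(\cdot/w)\|_k \le \Psi$. For the Mat\'ern kernel, whose RKHS is norm-equivalent to the Sobolev space $H^{\nu+d/2}$, a scaling computation gives $\|2\epsilon\, g(\cdot/w)\|_k \asymp \epsilon\, w^{-\nu}$, so the constraint forces $w \asymp \epsilon^{1/\nu}$ and hence $M \asymp \epsilon^{-d/\nu}$; for the SE kernel the rapid (Gaussian) spectral decay makes the norm grow like $\epsilon\exp(\Theta(1/w^2))$, so the constraint forces $w \asymp 1/\sqrt{\log(1/\epsilon)}$ and hence $M \asymp (\log(1/\epsilon))^{d/2}$.

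With the family in hand, I would place a uniform prior on the active index $m$ and note that, since under $f_m$ the peak value is $2\epsilon$ while $f_m\equiv 0$ outside cell $m$, achieving simple regret $<\epsilon$ under $f_m$ forces the returned point $x^T$ to lie in cell $m$. Let $n_m$ denote the number of queries landing in cell $m$. Then $\sum_m \EE_{f_0}[n_m]\le T$ and $\sum_m \PP_{f_0}(x^T\in\text{cell }m)\le 1$, so a constant fraction of cells satisfy both $\EE_{f_0}[n_m]\lesssim T/M$ and $\PP_{f_0}(x^T\in\text{cell }m)\lesssim 1/M$; fix such a cell $m^\ast$. Because $f_{m^\ast}$ and $f_0$ differ only inside cell $m^\ast$, a chain-rule decomposition of the KL divergence for adaptively collected data bounds $\mathrm{KL}(\PP_{f_0}\,\|\,\PP_{f_{m^\ast}})$ by $\EE_{f_0}[n_{m^\ast}]$ times the per-sample divergence, which for $R$-sub-Gaussian noise and gap $2\epsilon$ is $O(\epsilon^2/R^2)$. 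By Pinsker's inequality, if $T/M=o(R^2/\epsilon^2)$ then $\dTV(\PP_{f_0},\PP_{f_{m^\ast}})\to 0$, whence $\PP_{f_{m^\ast}}(x^T\in\text{cell }m^\ast)\lesssim 1/M+o(1)<\tfrac12$, contradicting success. Thus $T=\Omega(M/\epsilon^2)$, and substituting the two values of $M$ yields exactly $\Omega\!\big(\epsilon^{-2}(\log\tfrac1\epsilon)^{d/2}\big)$ for SE and $\Omega\!\big(\epsilon^{-(2+d/\nu)}\big)$ for Mat\'ern.

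I expect the main obstacle to be the kernel-specific RKHS-norm estimates for the bump: making the informal $\epsilon\,w^{-\nu}$ (Mat\'ern) and $\epsilon\exp(\Theta(1/w^2))$ (SE) scalings rigorous, and verifying that $g$ is a bona fide RKHS element whose norm is controlled up to absolute constants, so that the value of $M$ is pinned down correctly. A secondary technical point is carrying out the change-of-measure bound for a fully \emph{adaptive} algorithm, whose query locations and final report depend on past observations, and under general $R$-sub-Gaussian noise; this I would handle by instantiating the noise as Gaussian (a legitimate member of the $R$-sub-Gaussian class, so the resulting lower bound transfers) and invoking the chain-rule form of the KL divergence, while converting the uniform-prior failure into a worst-case-over-$f$ statement via a standard maximum-versus-average argument that also absorbs the constant $\tfrac12$ success probability.
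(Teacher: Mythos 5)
Your proposal is correct and follows essentially the same route as the source of this result: the paper does not prove Lemma \ref{lem:lower} itself but imports it from \cite{Cai20}, and both that proof and the paper's own informal sketch in Appendix \ref{sec:variable} use exactly your construction---disjointly supported bumps of height $\Theta(\epsilon)$ whose width $w \sim 1/\sqrt{\log(1/\epsilon)}$ (SE) or $w \sim \epsilon^{1/\nu}$ (Mat\'ern) is dictated by the RKHS-norm constraint, packed into $M \sim w^{-d}$ cells, followed by a change-of-measure/KL argument yielding $T = \Omega(M/\epsilon^2)$. No gaps worth flagging.
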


The following corollary provides a lower bound on the cumulative regret in the $m$-th batch, $R^m$, given an upper bound on $t_{m-1}$ and a lower bound on $t_m$, where we recall that $t_i$ is final time index of the $i$-th batch.

\begin{cor}
\label{cor:batch_lower}
Fix constants $a$ and $b$ such that $0<a<b\leq1$, as well as a batch index $m$.  Consider any batch algorithm whose batch sizes are fixed in advance, and satisfy
\begin{enumerate}
    \item for the SE kernel, $t_{m-1}<\big(\frac{T}{(\log T)^{d/2}}\big)^a(\log T)^{d/2}$ and $t_m \geq \big(\frac{T}{(\log T)^{d/2}}\big)^b(\log T)^{d/2}$;
    \item for the Mat\'ern kernel, $t_{m-1}<T^a$ and $t_m \geq T^b$.
\end{enumerate}
Then, there exists $f\in\mathcal{F}_k(\Psi)$ such that with probability at least $\frac{3}{4}$, the cumulative regret incurred in batch $m$ is upper bounded as follows:
\begin{enumerate}
    \item for the SE kernel, $R^m=\Omega\Big((\frac{T}{(\log T)^{d/2}})^{b-a/2}(\log T)^{d/2}\Big)$;
    \item for the Mat\'ern kernel; $R^m=\Omega\Big(T^{b-\frac{\nu}{2\nu+d}a}\Big)$.
\end{enumerate}
\end{cor}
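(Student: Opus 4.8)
The plan is to reduce the per-batch cumulative regret lower bound to the simple regret lower bound in Lemma \ref{lem:lower} via a contradiction argument. The key observation is that the batch sizes are fixed in advance, so the algorithm is committed to spending $N_m = t_m - t_{m-1}$ queries in batch $m$, and crucially, all information available \emph{when those queries are selected} was gathered in the first $m-1$ batches, i.e., within the first $t_{m-1}$ rounds. I would therefore view the samples collected in batches $1,\dots,m-1$ as an ``exploration phase'' of length at most $t_{m-1}$, after which the algorithm must commit to the $N_m$ points of batch $m$.

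\medskip

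First I would suppose, for contradiction, that the per-batch regret $R^m$ is \emph{smaller} than the claimed bound with probability exceeding $\frac{1}{2}$, uniformly over $f\in\mathcal{F}_k(\Psi)$. Since $R^m=\sum_{t=t_{m-1}+1}^{t_m} r_t \geq N_m \cdot \min_{t\in\text{batch }m} r_t$, a small $R^m$ forces at least one point in batch $m$ to have small simple regret; more usefully, the average simple regret over batch $m$ is $R^m/N_m$. I would then construct an algorithm that outputs a point with small simple regret using at most $t_{m-1}$ query rounds: run the original algorithm through batch $m-1$, then return (say) the best point of batch $m$ according to the resulting estimates, or average the batch-$m$ simple regret. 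The resulting simple-regret guarantee is of order $\epsilon \asymp R^m / N_m$, achieved using $t_{m-1}$ samples. Invoking Lemma \ref{lem:lower} with this $\epsilon$ and horizon $t_{m-1}$ gives a necessary lower bound relating $t_{m-1}$ and $\epsilon$; combined with the assumed upper bound on $t_{m-1}$ and lower bound on $t_m \le N_m + t_{m-1} = O(N_m)$ (or directly $t_m\ge N_m$), this pins down $\epsilon$ and hence forces $R^m = \epsilon \cdot N_m$ to be at least the claimed order, contradicting the assumption.

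\medskip

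Concretely, for the Mat\'ern case Lemma \ref{lem:lower} gives $t_{m-1} = \Omega(\epsilon^{-(2+d/\nu)})$, so $\epsilon = \Omega(t_{m-1}^{-\frac{\nu}{2\nu+d}}) = \Omega(T^{-\frac{\nu}{2\nu+d}a})$ using $t_{m-1} < T^a$. Then $R^m \gtrsim \epsilon \cdot N_m \gtrsim T^{-\frac{\nu}{2\nu+d}a}\cdot T^b = T^{b-\frac{\nu}{2\nu+d}a}$, using $N_m \ge t_m \ge T^b$ (up to the additive $t_{m-1}<t_m$ correction, which only costs constants since $a<b$). The SE case proceeds identically with the polylogarithmic correction factors tracked through the $\big(\frac{T}{(\log T)^{d/2}}\big)^a(\log T)^{d/2}$ parameterization, yielding the exponent $b - a/2$ on the leading term.

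\medskip

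The main obstacle is the reduction from a per-batch cumulative regret statement to the simple-regret setting of Lemma \ref{lem:lower}, and in particular justifying that the lower bound instance $f$ can be chosen \emph{after} fixing the algorithm yet still make the batch-$m$ points perform poorly. Lemma \ref{lem:lower} is an \emph{existence}-of-hard-$f$ statement for any simple-regret algorithm; I must argue that any algorithm achieving small $R^m$ on all $f$ would induce a simple-regret algorithm violating Lemma \ref{lem:lower}, which requires care because the batch-$m$ points are chosen \emph{before} their observations and so genuinely behave like an offline/simple-regret commitment. The delicate points are (i) converting ``small sum over batch $m$'' into ``small expected simple regret of a single returned point'' while preserving the $\frac{1}{2}$-probability guarantee, and (ii) handling the gap between $t_{m-1}$ (the information horizon) and the application of Lemma \ref{lem:lower}, ensuring the additive $t_{m-1}$ term never dominates $N_m$, which is exactly where the hypothesis $a<b$ is used.
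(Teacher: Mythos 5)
Your proposal is correct and takes essentially the same route as the paper's own proof: both reduce to the simple-regret bound of Lemma \ref{lem:lower} by observing that the batch-$m$ points depend only on the $t_{m-1}$ earlier observations, letting the returned point $x^T$ be a uniformly random (equivalently, averaged) point of batch $m$ so that $\epsilon \asymp R^m/N_m$, and using $a<b$ to ensure $N_m = \Theta\big(t_m\big)$ so the claimed bound follows by contradiction with the assumed bound on $t_{m-1}$. The delicate points you flag are handled in the paper exactly as you suggest, so the two arguments coincide in substance.
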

\begin{proof}
For the SE kernel, the inequalities $t_{m-1}<\big(\frac{T}{(\log T)^{d/2}}\big)^a(\log T)^{d/2}$ and $t_m \geq \big(\frac{T}{(\log T)^{d/2}}\big)^b(\log T)^{d/2}$ (with $b>a$) indicate that the size of the $m$-th batch satisfies $N_m \geq C_m \big(\frac{T}{(\log T)^{d/2}}\big)^b(\log T)^{d/2}$ for some constant $C_m$. Now suppose on the contrary that there exists an algorithm with $t_{m-1}$ and $t_m$ bounded as above such that, for any $f\in\mathcal{F}_k(\Psi)$, we have $R^m=o\big(\big(\frac{T}{(\log T)^{d/2}}\big)^{b-a/2}(\log T)^{d/2}\big)$ with probability at least $\frac{3}{4}$. Then, the average regret of points in the $m$-th batch is $\frac{R_m}{N_m}=o\big(\big(\frac{T}{(\log T)^{d/2}}\big)^{-a/2}\big)$ due to the lower bound on $N_m$. Substituting $\epsilon=o\big(\big(\frac{T}{(\log T)^{d/2}}\big)^{-a/2}\big)$ into Lemma \ref{lem:lower},\footnote{When applying Lemma \ref{lem:lower}, we substitute the (single) batch length for $T$, and let $x^T$ be chosen uniformly at random from the set of sampled points in the batch.} we find that this algorithm must have $t_{m-1}=\sum_{i=1}^{m-1} N_i = \omega\big(\big(\frac{T}{(\log T)^{d/2}}\big)^a(\log T)^{d/2}\big)$. However, this contradicts our assumption of $t_{m-1}<\big(\frac{T}{(\log T)^{d/2}}\big)^a(\log T)^{d/2}$. Hence, the claimed lower bound on $R_T$ must hold.

Similarly, for the Mat\'ern kernel, the inequalities $t_{m-1}<T^a$ and $t_m\geq T^b$ indicate the size of the $m$-th batch satisfies $N_m \geq C_m T^b$ for some constant $C_m$. Now suppose on the contrary that there exists an algorithm with $t_{m-1}$ and $t_m$ bounded as above such that, for any $f\in\mathcal{F}_k(\Psi)$, we have  $R^m=o\big(T^{b-\frac{\nu}{2\nu+d}a}\big)$ with probability at least $\frac{3}{4}$. Then, the average simple regret of the $m$-th batch is $\frac{R_m}{N_m}=o\big(T^{-\frac{\nu}{2\nu+d}a}\big)$ due to the lower bound on $N_m$. Substituting $\epsilon=o\big(T^{-\frac{\nu}{2\nu+d}a}\big)$ into Lemma \ref{lem:lower}, we find that this algorithm must have $t_{m-1}=\sum_{i=1}^{m-1} N_i = \omega(T^a)$. However, this contradicts our assumption of $t_{m-1}<T^a$. Hence, the claimed lower bound on $R_T$ must hold.
\end{proof}

Using Corollary \ref{cor:batch_lower} as a building block, we can obtain our main lower bound, stated as follows.

\begin{thm}
\label{thm:lower}
For any batch algorithm with a constant number $B$ of batches and batch sizes fixed in advance, there exists $f\in\mathcal{F}_k(\Psi)$ such that we have with probability at least $\frac{3}{4}$ that
\begin{enumerate}
    \item for the SE kernel, $R_T=\Omega\Big(T^\frac{1-\eta}{1-\eta^B}(\log T)^{\frac{d(\eta-\eta^B)}{2(1-\eta^B)}}\Big)$, where $\eta=\frac{1}{2}$;
    \item for the Mat\'ern kernel, $R_T=\Omega\Big(T^\frac{1-\eta}{1-\eta^B}\Big)$, where $\eta=\frac{\nu}{2\nu+d}$.
\end{enumerate}
\end{thm}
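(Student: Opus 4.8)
The plan is to reduce Theorem \ref{thm:lower} to a pigeonhole argument that, for any pre-specified batch schedule, isolates a single batch whose cumulative regret already matches the target, and then to invoke Corollary \ref{cor:batch_lower} on that batch. Fix any batch algorithm with batch sizes fixed in advance, and write its cumulative times as $t_0 = 0 < t_1 < \dots < t_B = T$. Set $\eta = \frac{1}{2}$ (SE) or $\eta = \frac{\nu}{2\nu+d}$ (Mat\'ern), let $\Delta = \frac{1-\eta}{1-\eta^B}$ be the target exponent, and introduce the ``balanced'' exponents $a_i^* = \frac{1-\eta^i}{1-\eta^B}$ for $i = 0, 1, \dots, B$, so that $a_0^* = 0$ and $a_B^* = 1$. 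The one computation driving everything is the telescoping identity $a_m^* - \eta\, a_{m-1}^* = \Delta$, valid for every $m$; this is exactly the statement that the balanced schedule equalizes the per-batch regret exponent produced by Corollary \ref{cor:batch_lower}, and it is where the particular form of $\Delta$ enters.

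Next I would run the pigeonhole. Define thresholds $\theta_i = T^{a_i^*}$ in the Mat\'ern case, and $\theta_i = \big(\frac{T}{(\log T)^{d/2}}\big)^{a_i^*}(\log T)^{d/2}$ in the SE case; note $\theta_0 \ge 1 > t_0 = 0$ and $\theta_B = T = t_B$ in both cases. I claim there exists a batch index $m$ with $t_{m-1} < \theta_{m-1}$ and $t_m \ge \theta_m$. Indeed, suppose no such $m$ exists. Since $t_0 < \theta_0$, an induction shows that $t_i < \theta_i$ for all $i$: if $t_{m-1} < \theta_{m-1}$, then the failure of index $m$ forces $t_m < \theta_m$. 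Taking $i = B$ yields $t_B < \theta_B = T$, contradicting $t_B = T$. Hence such an $m$ exists.

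For this $m$ with $m \ge 2$, I would apply Corollary \ref{cor:batch_lower} with $a = a_{m-1}^*$ and $b = a_m^*$: the conditions $0 < a < b \le 1$ hold because $0 < a_{m-1}^* < a_m^* \le 1$, and the hypotheses on $t_{m-1}, t_m$ are exactly $t_{m-1} < \theta_{m-1}$ and $t_m \ge \theta_m$. The corollary then produces $f \in \mathcal{F}_k(\Psi)$ with $R^m = \Omega(T^{b - \eta a})$ (Mat\'ern) or $R^m = \Omega\big(\big(\frac{T}{(\log T)^{d/2}}\big)^{b - a/2}(\log T)^{d/2}\big)$ (SE), and substituting $b - \eta a = \Delta$ from the telescoping identity gives precisely the claimed exponents; for the SE case one simplifies $\big(\frac{T}{(\log T)^{d/2}}\big)^{\Delta}(\log T)^{d/2} = T^{\Delta}(\log T)^{\frac{d}{2}(1-\Delta)}$ and checks $\frac{d}{2}(1-\Delta) = \frac{d(\eta - \eta^B)}{2(1-\eta^B)}$. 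Since $R_T \ge R^m$, this yields the theorem with probability at least $\frac{1}{2}$.

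The one genuinely delicate point, and the step I expect to require the most care, is the boundary case $m = 1$, for which $a_{m-1}^* = a_0^* = 0$ violates the requirement $a > 0$ in Corollary \ref{cor:batch_lower}. Here I would argue directly rather than through the corollary: the points of the first batch are selected with no observations at all, so committing to a uniformly random first-batch point constitutes a rule that uses zero samples yet attains expected simple regret equal to the average first-batch regret. By Lemma \ref{lem:lower} (applied with a vanishing sample budget), no such zero-sample rule can drive the expected simple regret below a fixed constant for all $f$, so there is an $f$ making the average first-batch regret $\Omega(1)$; combined with $N_1 = t_1 \ge \theta_1$ this gives $R^1 = \Omega(\theta_1)$, which equals the target since $a_1^* = \Delta$. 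This boundary argument, together with the telescoping identity ensuring the interior exponents come out to exactly $\Delta$ with no slack, is what makes the bound tight; the pigeonhole and the conversion of thresholds are otherwise routine.
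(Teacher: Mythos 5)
Your proposal is correct and follows essentially the same route as the paper's proof: the paper's case-by-case enumeration (cases $1$ through $B$ in Appendix \ref{sec:pf_converse}) is exactly your pigeonhole over the thresholds $\theta_i$ with balanced exponents $\frac{1-\eta^i}{1-\eta^B}$, the telescoping identity $b-\eta a = \frac{1-\eta}{1-\eta^B}$ is used implicitly there in the same way, and the paper likewise handles the $m=1$ boundary case separately via the trivial $\Omega(1)$ simple-regret bound for an unobserved function class, yielding $R^1 = \Omega(N_1)$.
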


The proof is given in Appendix \ref{sec:pf_converse}, with the main idea being  that Corollary \ref{cor:batch_lower}  characterizes how the regret must scale in each batch depending on the pre-specified batch lengths.  We show that no matter how the batch lengths are chosen, there will always be at least one ``bad batch'' (i.e., the batch is too long and/or there were not enough prior samples to reduce the uncertainty a sufficient amount) that incurs the specified lower bound on regret.

Theorem \ref{thm:lower}, together with Theorem \ref{thm:upper} (with $\Psi$ treated as a constant), establishes that the overall cumulative regret for the modified BPE algorithm under the constant $B$ setting is $R_T=\Theta^\ast(T^\frac{1-\eta}{1-\eta^B}(\log T)^{\Theta(d)})$ for the SE kernel and $R_T=\Theta^\ast\big(T^\frac{1-\eta}{1-\eta^B}\big)$ for the Mat\'ern kernel.  Thus, we have established matching behavior in the upper and lower bounds.

% Furthermore, when $B$ increases to $\Theta(\log\log T)$, we could switch back to the original BPE algorithm, which achieves $R_T=O^\ast(T^{1-\eta}(\log T)^d)$ for the SE kernel and $R_T=O^\ast(T^{1-\eta})$ for the Mat\'ern kernel.

\section{EXPERIMENTS}

While the goals of this paper are theoretical in nature, here we support our findings with some basic proof-of-concept experiments.  We emphasize that we do not attempt to claim state-of-the-art practical performance against other batch algorithms.

We produce two synthetic 2D functions and execute our algorithm with different values of $B$. We fix $|\mathcal{X}| = 2500$ points by evenly splitting the domain into a $50 \times 50$ grid. We let $T=1000$ and $\sigma=0.02$ and use the SE kernel with $l=0.5$ as prior. For each function, we consider five optimization algorithms: 
\begin{itemize}
    \item[(1)] Orig-BPE: Original BPE with batch sizes as stated in Algorithm \ref{algo:bpe}.
    \item[(2-4)]  $\{3,4,6\}$-BPE: modified BPE with $B=3,4,6$ respectively and batch sizes as stated in Theorem \ref{thm:upper} but with the logarithmic term ignored.  As a slight practical variation, after forming the initial batch lengths $N_1,\dotsc,N_B$, we normalize them by $\frac{T}{\sum_{i=1}^B N_i}$ to maintain an overall length of $T$.
    \item[(5)] GP-UCB, as a representative non-batch algorithm \citep{Sri09}.
\end{itemize}

\begin{figure}[t!]
\centering
\minipage[t]{0.25\textwidth}
	\includegraphics[width=\linewidth]{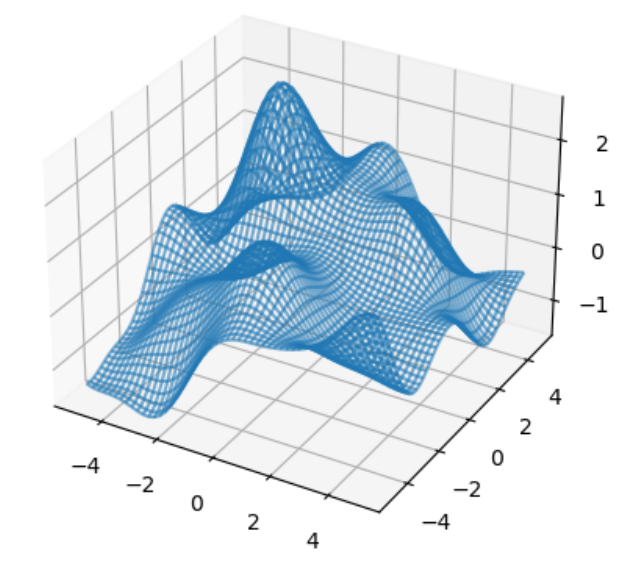}
	\caption{$f_1$}
	\label{fig:f1}
\endminipage
\minipage[t]{0.25\textwidth}
	\includegraphics[width=\linewidth]{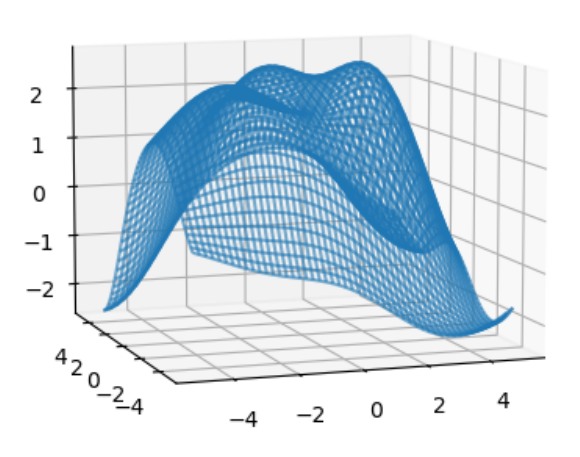}
	\caption{$f_2$}
	\label{fig:f2}
\endminipage
\end{figure}

\begin{figure*}[t!]
\minipage[t]{0.5\textwidth}
	\vspace*{-5mm}
	\includegraphics[width=\linewidth]{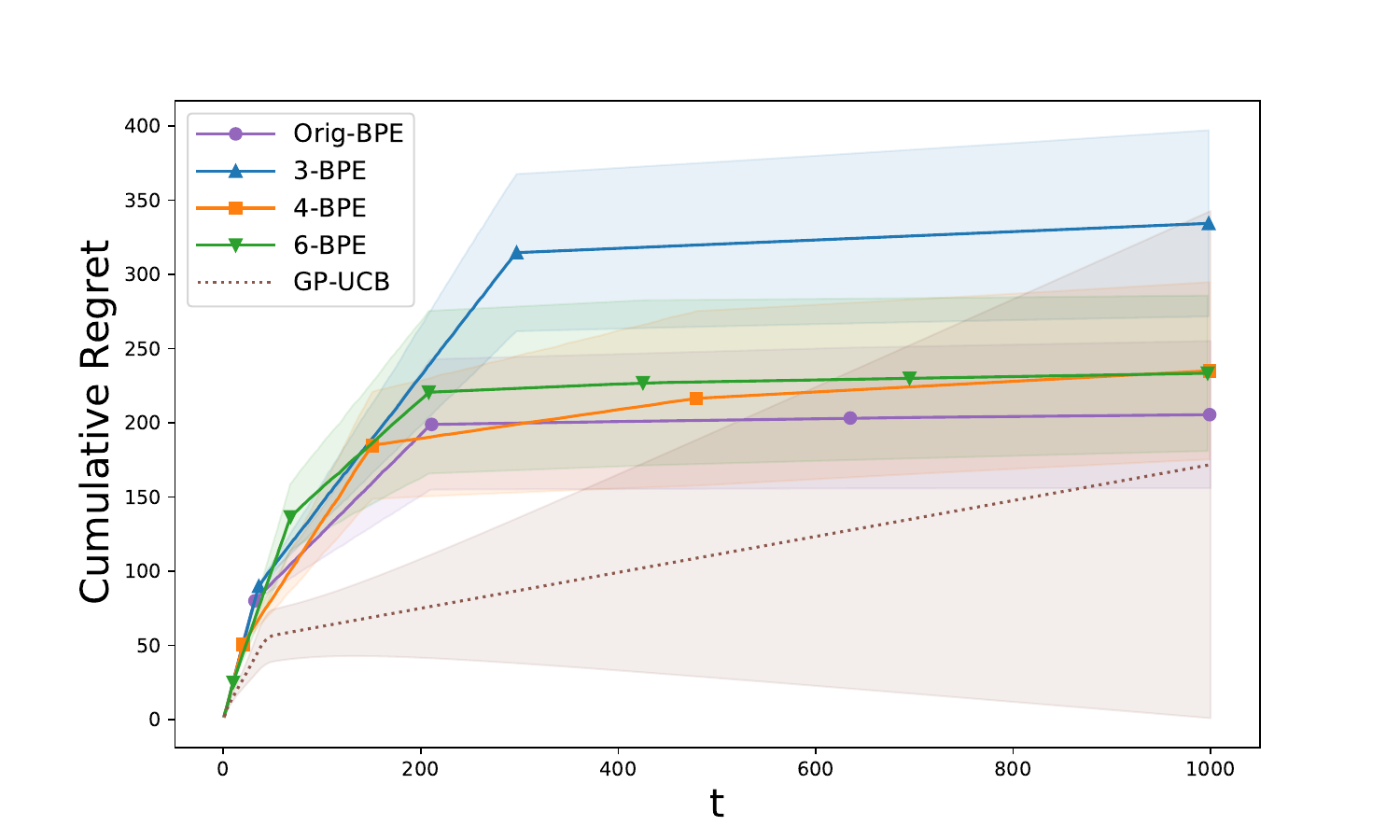}
	\vspace*{-8mm}
  	\caption{$f_1$ with $\beta=2$}
  	\label{fig:f1_2_reset_bpe}
\endminipage
\minipage[t]{0.5\textwidth}
	\vspace*{-5mm}
	\includegraphics[width=\linewidth]{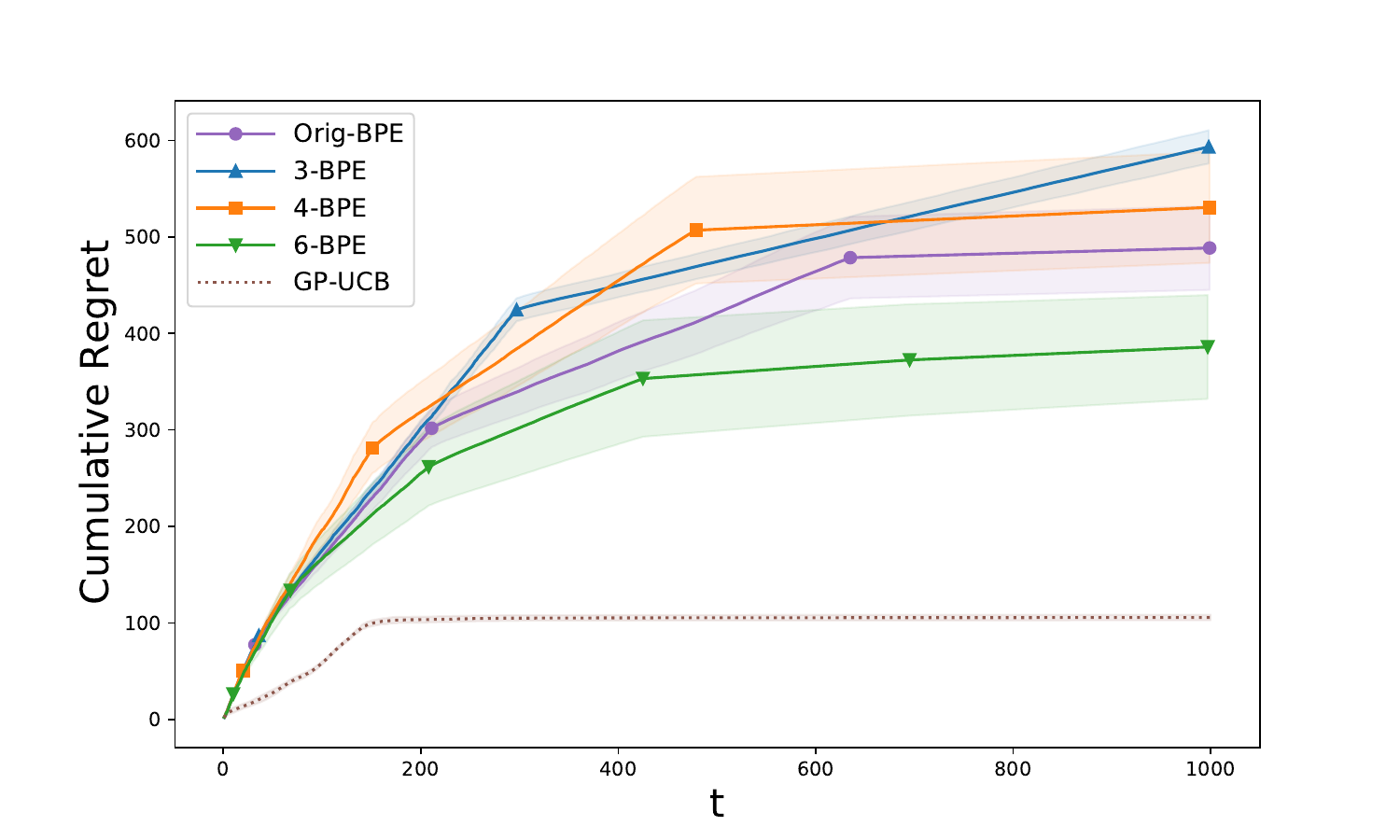}
	\vspace*{-8mm}
	\caption{$f_2$ with $\beta=6$}
	\label{fig:f2_6_reset_bpe}
\endminipage
\end{figure*}

\begin{figure*}[t!]
\minipage[t]{0.5\textwidth}
  \includegraphics[width=\linewidth]{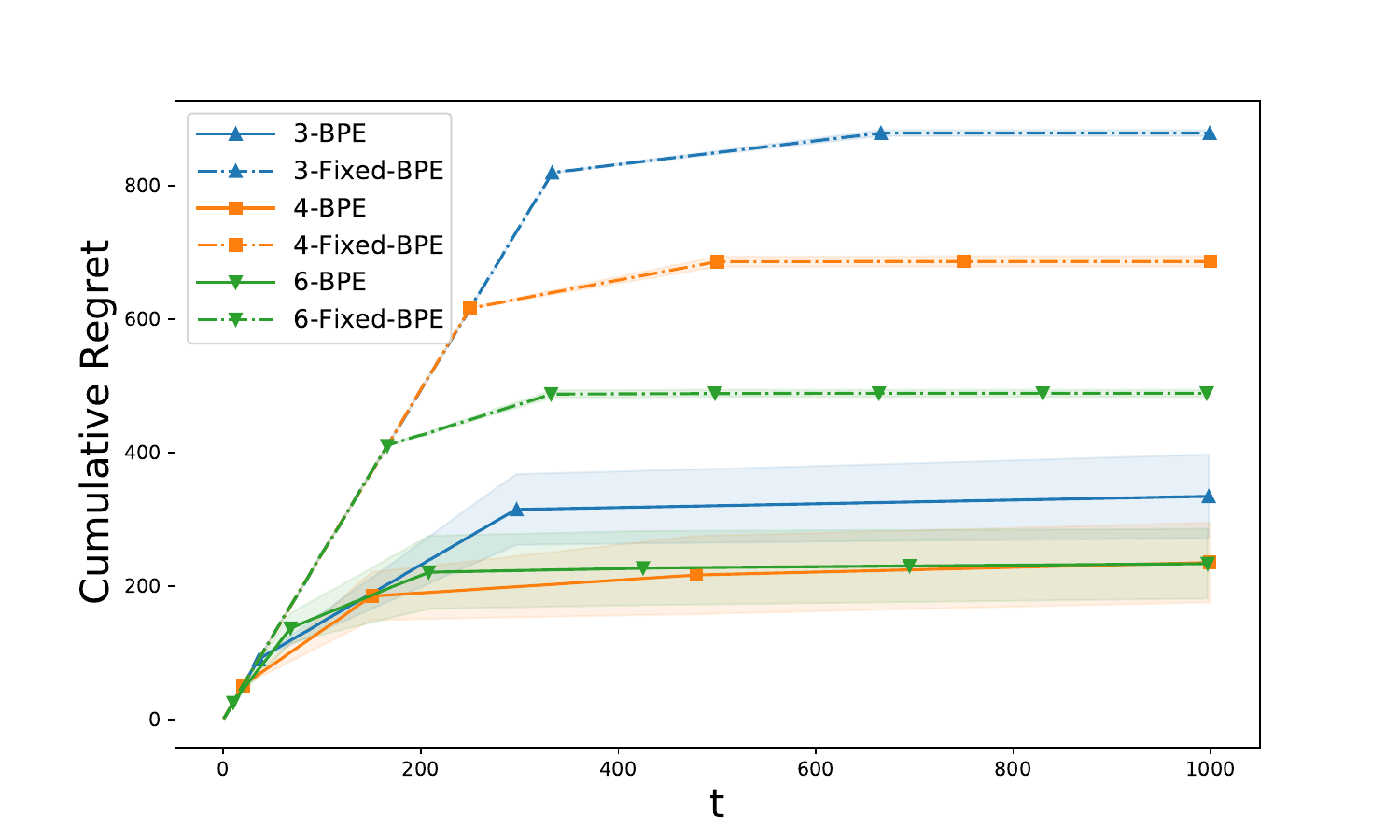}
  \vspace*{-8mm}
  \caption{BPE and Fixed-BPE on $f_1$ with $\beta=2$}
  \label{fig:f1_2_reset_fixed}
\endminipage
\minipage[t]{0.5\textwidth}
	\includegraphics[width=\linewidth]{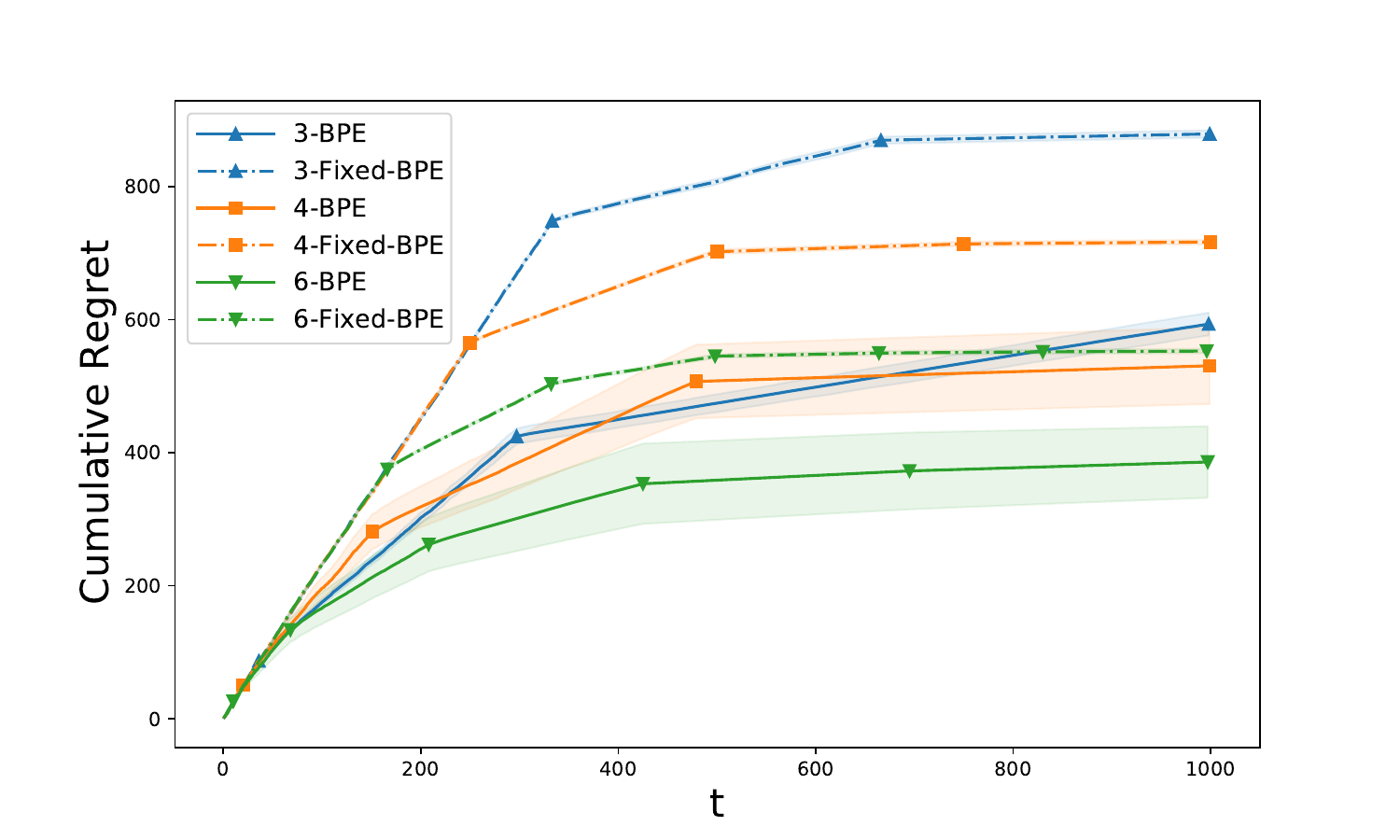}
	\vspace*{-8mm}
	\caption{BPE and Fixed-BPE on $f_2$ with $\beta=6$}
	\label{fig:f2_6_reset_fixed}
\endminipage
\end{figure*}

\begin{figure*}[t!]
\minipage[t]{0.5\textwidth}
 	\includegraphics[width=\linewidth]{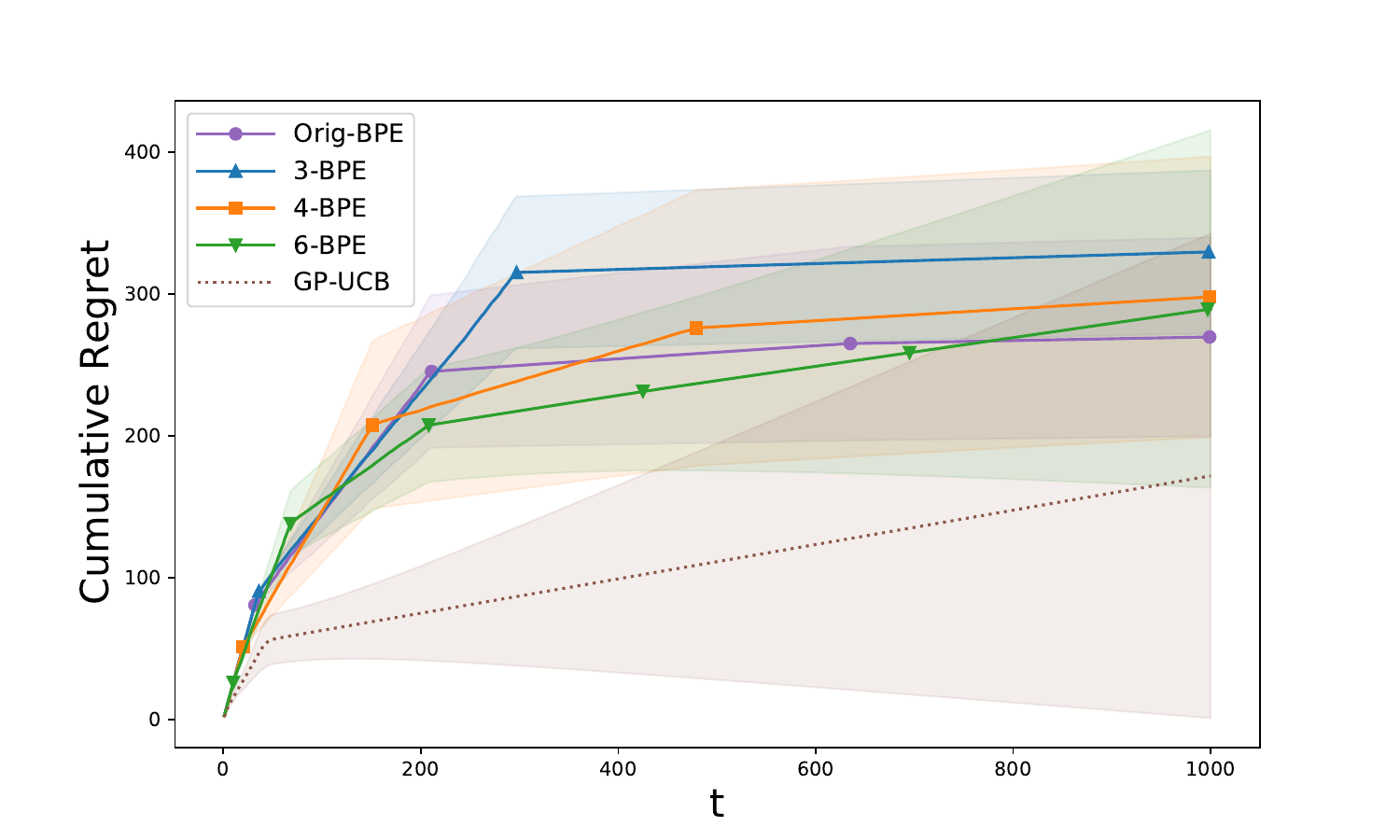}
  	\vspace*{-8mm}
  	\caption{$f_1$ with $\beta=2$ and full posterior}
  	\label{fig:f1_2_full_bpe}
\endminipage
\minipage[t]{0.5\textwidth}
	\includegraphics[width=\linewidth]{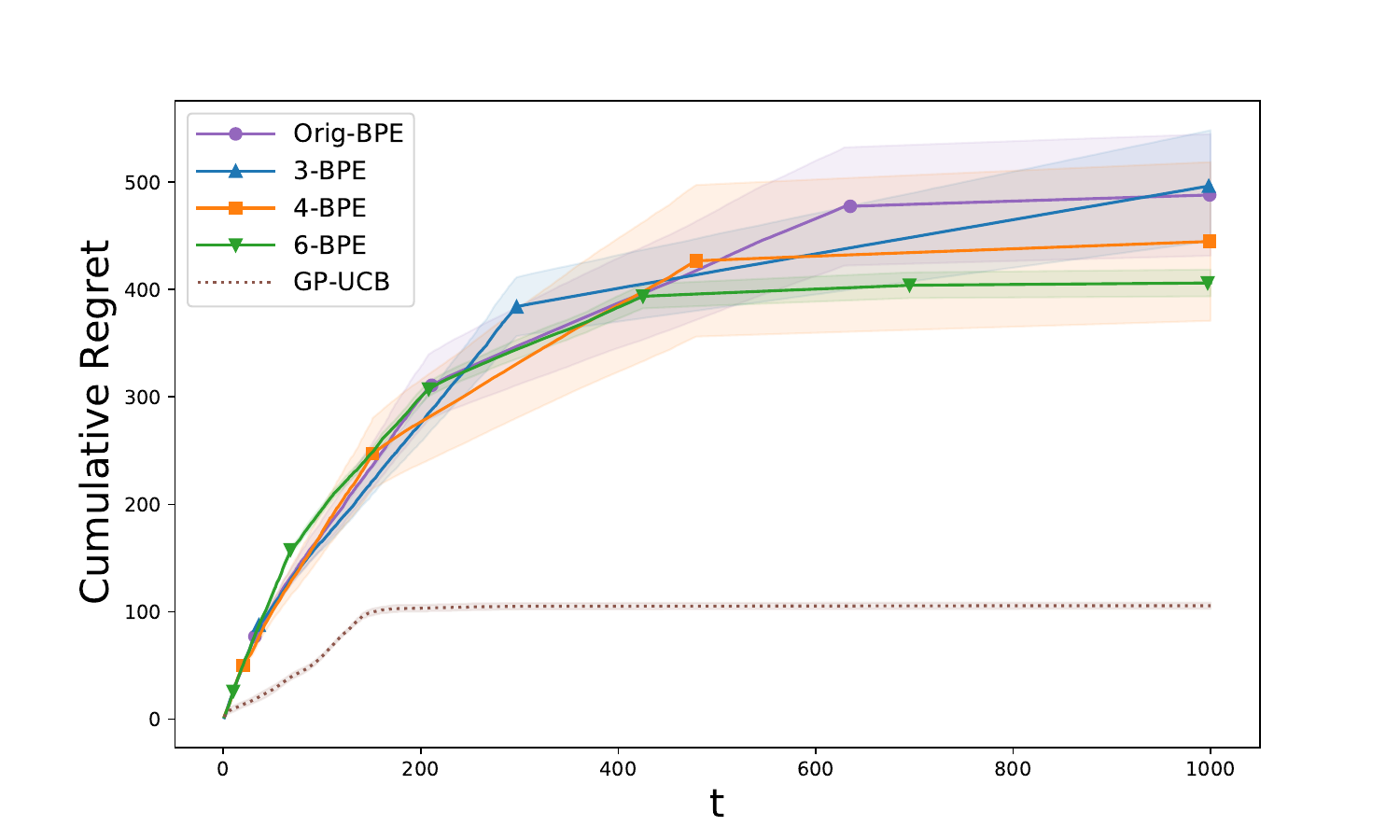}
	\vspace*{-8mm}
	\caption{$f_2$ with $\beta=6$ and full posterior}
	\label{fig:f2_6_full_bpe}
\endminipage
\end{figure*}

\begin{figure*}[t!]
\minipage[t]{0.5\textwidth}
	\includegraphics[width=\linewidth]{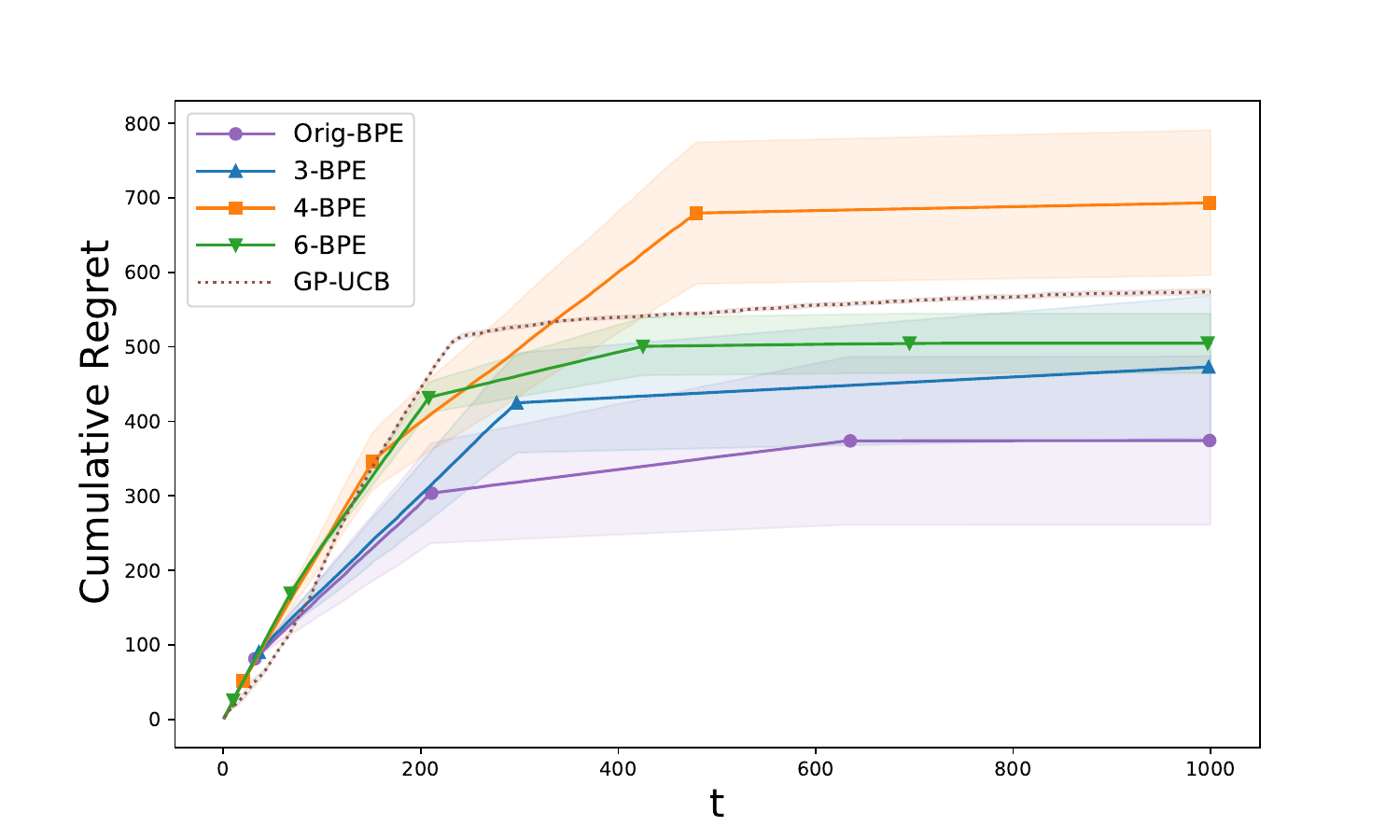}
	\vspace*{-8mm}
	\caption{$f_1$ with $\beta_i=3\log(2i)$ and full posterior}
	\label{fig:f1_log_full_bpe}
\endminipage
\minipage[t]{0.5\textwidth}
	\includegraphics[width=\linewidth]{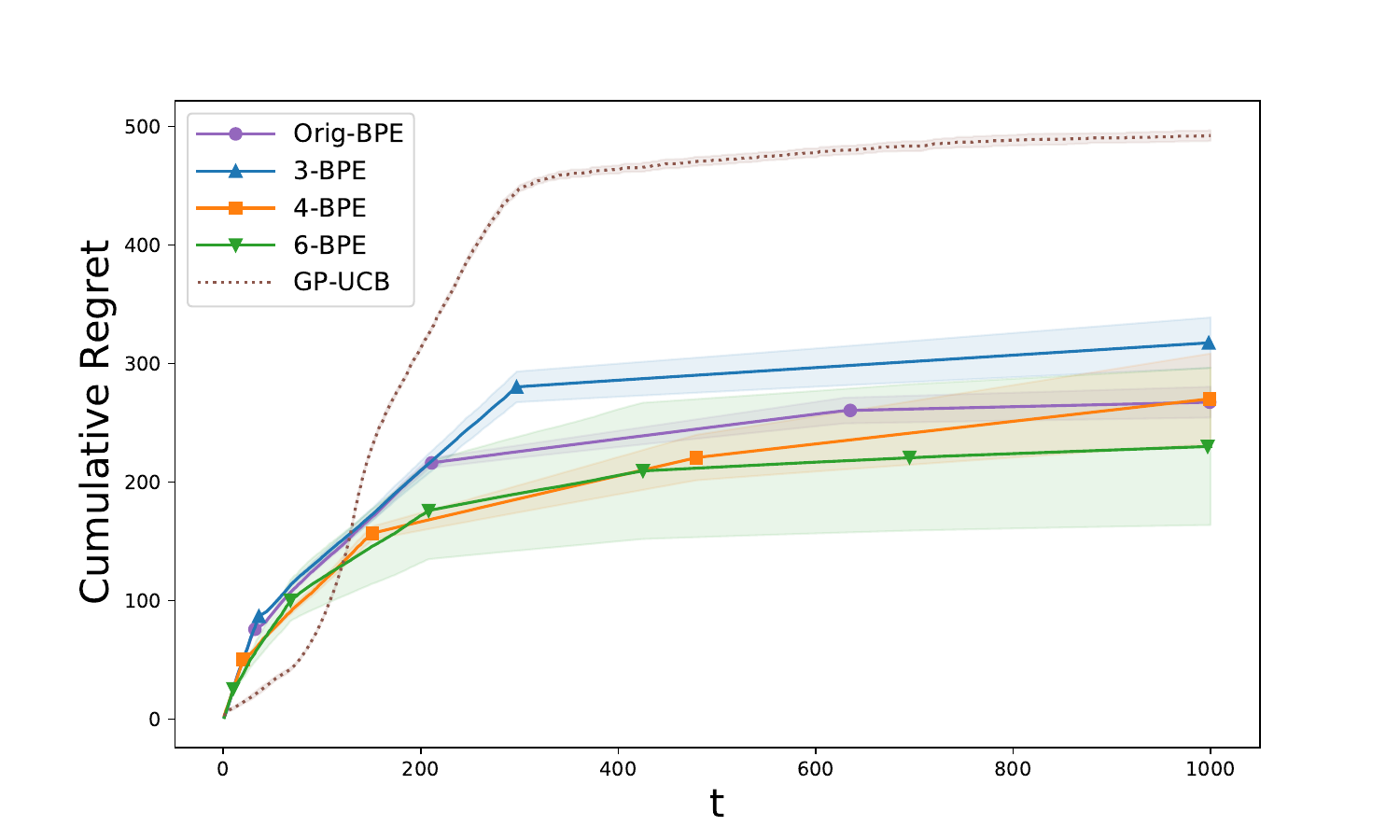}
	\vspace*{-8mm}
	\caption{$f_2$ with $\beta_i=3\log(2i)$ and full posterior}
	\label{fig:f2_log_full_bpe}
\endminipage
\end{figure*}

We use $f_1$ shown in Figure \ref{fig:f1}, a function with one obvious maximum, and $f_2$ shown in Figure \ref{fig:f2}, a function with multiple near-optimal local maxima.  We produce our results by performing $10$ trials and plotting the average cumulative regret, with error bars indicating half a standard deviation. As is common in the literature, instead of relying on the theoretical choice of $\beta$, we set it based on a minimal amount of tuning, choosing $\beta = 2$ for $f_1$ (less exploration needed to find the clear peak) and $\beta = 6$ for $f_2$ (encouraging slightly more exploration).

{\bf Comparison of algorithms.} We compare the algorithms for $f_1$ and $f_2$ in Figures \ref{fig:f1_2_reset_bpe} and \ref{fig:f2_6_reset_bpe}, respectively.  While the curves for $B \in \{3,4,6\}$ are often close, there is a clear trend that smaller $B$ tends to incur higher regret, which is consistent with our theory.  We also observe that $6$-BPE can outperform Orig-BPE, which is natural because the latter only uses 4 batches when $T = 1000$ (the growth of $\log \log T$ is extremely slow).  GP-UCB unsurprisingly has the smallest regret, as it has the benefit of being fully sequential, though the large error bars in Figure \ref{fig:f1_2_reset_bpe} indicate that compared to BPE, it can suffer more from ``failure runs'' when $\beta$ is chosen too aggressively (small).

{\bf Fixed vs.~varying batch lengths.} In Figures \ref{fig:f1_2_reset_fixed} and \ref{fig:f2_6_reset_fixed}, we compare our varying batch lengths to fixed batch lengths.  We observe that fixed batch lengths tend to be highly suboptimal, though the gap shrinks as $B$ increases.  This supports the fact that varying batch lengths are essential for obtaining near-optimal regret guarantees, as discussed in the introduction.

{\bf Partial vs.~full posterior.} Recall that our theoretical analysis crucially relied on ignoring previous batches in order to apply Lemma \ref{lem:conf}.  Next, we investigate whether using all sampled points to form a ``full posterior'' can also be effective in practice.  In this case, prior work suggests that $\beta$ should logarithmically increase with the batch index, so along with the above choices of $\beta=2$ and $\beta =6$, we consider the use of $\beta_i=3\log(2i)$ in the $i$-th batch, which is analogous to the choices made in existing works (e.g., \citep{Sri09,Rol18}).  The results are shown in Figures \ref{fig:f1_2_full_bpe} and \ref{fig:f2_6_full_bpe} (fixed $\beta$) and Figures \ref{fig:f1_log_full_bpe} and \ref{fig:f2_log_full_bpe} (growing $\beta_i$).

The curves in Figures \ref{fig:f1_2_full_bpe} and \ref{fig:f2_6_full_bpe} generally reach similar regret values as Figures \ref{fig:f1_2_reset_bpe} and \ref{fig:f2_6_reset_bpe}, suggesting that using the full posterior can also be effective, though establishing the corresponding theory appears to be difficult.  Another viewpoint on this finding is that although discarding previously-collected data may seem wasteful, it does not appear to harm the performance here.  In these figures, we also observe an effect of the various versions of BPE moving closer together, lower values of $B$ having reduced regret, but the higher values of $B$ having increased regret.

Finally, from Figures \ref{fig:f1_log_full_bpe} and \ref{fig:f2_log_full_bpe}, when we use $\beta_i=3\log(2i)$, the ordering of the algorithms can change significantly.  This is because of two distinct factors impacting the algorithms in different ways:  Increasing $B$ has its obvious benefit of gathering information more frequently, but it also has the effect of making $\beta_i$ grow larger than the ideal value.  An extreme case of this is GP-UCB in Figure \ref{fig:f2_log_full_bpe}, where the regret is significantly higher due to $\beta_i$ eventually growing as large as $3\log(2000) \approx 22.8$.  Of course, further tuning $\beta$ separately for each algorithm would alleviate this issue, but may not always be practical.

\section{CONCLUSION}

We introduced BPE algorithm for GP bandit optimization with few batches. When the number of batch is not specified by the problem, BPE achieves $O^\ast(\sqrt{T\gamma_T})$ cumulative regret with $O(\log\log T)$ batches. When the number of batches is a small constant specified in the problem, we proposed BPE with modified batch size arrangement. We also provided a lower bound for general batch algorithms with a given number of batches.

\subsubsection*{Acknowledgements}
This work was supported by the Singapore National Research Foundation (NRF) under grant number R-252-000-A74-281.

\bibliography{FewBatches_References}

%%%%%%%%%%%%%%%%%%%%%%%%%%%%%%%%%%%
%%%%%% SUPPLEMENT (OPTIONAL) %%%%%%
%%%%%%%%%%%%%%%%%%%%%%%%%%%%%%%%%%%

\clearpage
\appendix

\thispagestyle{empty}

% For one-column format, uncomment the following:
\onecolumn \makesupplementtitle
% For two-column format, uncomment the following:
%\twocolumn[ \makesupplementtitle ]

All citations in this appendix are to the reference list in the main body.

\section{PROOFS OF MAIN RESULTS}

\subsection{Proof of Theorem \ref{thm:bpe} (Upper Bound with $O(\log \log T)$ Batches)}\label{sec:proof_bpe}

As discussed in the main text, we can apply Corollary \ref{cor:beta} to any single batch due to the manner in which our algorithm ignores previous batches.  To ensure valid confidence bounds simultaneously for all batches with probability at least $1-\delta$, we further replace $\delta$ by $\frac{\delta}{B}$ therein, and apply the union bound.  This yields $\beta = \big( \Psi + \frac{R}{\sqrt{\lambda}} \sqrt{2 \log \frac{|\Xc| B}{\delta}} \big)^2$, as stated in the description of the algorithm.  We henceforth condition on the resulting confidence bounds remaining valid in all batches.  

With such conditioning, we can deduce the theoretical upper bound on the overall cumulative regret by aggregating the cumulative regret of individual batches. For the first batch with $i=1$ and $t \leq N_1$, the simple regret of any selected point is upper bounded as follows:
\begin{align*}
	r_t &= f(x^\ast)-f(x_t) \stackrel{(a)}{\leq} \mathrm{UCB}_1(x^\ast)-\mathrm{LCB}_1(x_t)\leq  (\mu_0+\sqrt{\beta}\sigma_0)-(\mu_0-\sqrt{\beta}\sigma_0) \stackrel{(b)}{\leq} 2\sqrt{\beta},
\end{align*}
where (a) uses the validity of the confidence bounds, and (b) follows since $\mu_0(\cdot)=0$ and $\sigma_0(\cdot)=k(\cdot,\cdot)\leq 1$. Hence, with $N_1=\big\lceil\sqrt{T}\big\rceil<\sqrt{T}+1$, the cumulative regret of the first batch is
\begin{align}
	R^1=\sum_{t=1}^{N_1}r_t\leq 2N_1\sqrt{\beta}=O(\sqrt{T\beta}). \label{eq:R1}
\end{align}
From the second batch onwards with $i\geq 2$ and $t>N_1$, we make use of the following lemma, which is stated in a generic form (i.e., it should initially be read without regard to using batches).

\begin{lem} \label{lem:elim}
\textup{\citep[Appendix B.5]{Cai21}}\label{lem:r_t}
    Suppose that we run an algorithm that, at each time $t$, samples the point $x_t$ with the highest value of $\sigma_{t-1}(x_t)$.   Moreover, suppose that after $\tau$ rounds of sampling, all points in the domain satisfy a confidence bound of the form $|f(x)-\mu_\tau(x)|\leq\sqrt{\beta}\sigma_\tau(x)$ for some $\beta > 0$.  Then, for any point $x$ whose UCB score is higher than the maximum LCB score, it must hold that
    \begin{equation}
        \big| f(x^*)-f(x)\big| \leq2\sqrt{\frac{C_1\gamma_{\tau} \beta}{\tau}},
    \end{equation}
    where $C_1=8/\log(1+\lambda^{-1})$, and $x^*$ is any maximizer of $f$.
\end{lem}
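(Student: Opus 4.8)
The plan is to decompose the claim into two parts: (i) a \emph{uniform} bound showing that greedy maximum-variance sampling drives the largest remaining posterior standard deviation $\max_{x\in\mathcal{X}}\sigma_\tau(x)$ down to $O(\sqrt{\gamma_\tau/\tau})$, and (ii) the standard action-elimination argument that converts such a posterior-variance bound into the stated regret bound using the assumed confidence interval. Note that part (ii) uses the confidence bound purely as a hypothesis of the lemma, so the technical work is concentrated in part (i).

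For part (i), I would first exploit two structural facts about the GP posterior variance \eqref{eq:var}: it depends only on the sampled \emph{locations} (not the observed values), and it is monotonically non-increasing as points are added, so that $t\mapsto \max_{x}\sigma_t(x)$ is non-increasing. Combined with the greedy selection rule $x_t=\argmax_{x}\sigma_{t-1}(x)$, this gives $\sigma_{t-1}(x_t)=\max_{x}\sigma_{t-1}(x)\ge \max_{x}\sigma_\tau(x)$ for all $t\le\tau$, and hence
\begin{equation*}
	\tau\,\Big(\max_{x\in\mathcal{X}}\sigma_\tau(x)\Big)^2\;\le\;\sum_{t=1}^{\tau}\sigma_{t-1}(x_t)^2 .
\end{equation*}
I would then bound the right-hand side by the maximum information gain, using the identity $I(f;\yv_\tau)=\tfrac12\sum_{t=1}^{\tau}\log\big(1+\lambda^{-1}\sigma_{t-1}(x_t)^2\big)$ together with the elementary inequality $u\le c_\lambda\log(1+\lambda^{-1}u)$ on $u\in(0,1]$ (applicable since $\sigma_{t-1}(x_t)^2\le k(x_t,x_t)\le 1$), which yields $\sum_{t=1}^{\tau}\sigma_{t-1}(x_t)^2\le \tfrac14 C_1\gamma_\tau$ after bookkeeping of constants, with $C_1=8\lambda^{-1}/\log(1+\lambda^{-1})$. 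Dividing by $\tau$ gives $\max_{x}\sigma_\tau(x)\le\tfrac12\sqrt{C_1\gamma_\tau/\tau}$.

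For part (ii), let $x$ be any point whose UCB score exceeds $\max_{x'}\mathrm{LCB}_\tau(x')$, and apply the assumed confidence bound at both $x$ and a maximizer $x^\ast$. On one hand $\mathrm{LCB}_\tau(x^\ast)=\mu_\tau(x^\ast)-\sqrt{\beta}\sigma_\tau(x^\ast)\ge f(x^\ast)-2\sqrt{\beta}\sigma_\tau(x^\ast)$; on the other hand $\mathrm{UCB}_\tau(x)=\mu_\tau(x)+\sqrt{\beta}\sigma_\tau(x)\le f(x)+2\sqrt{\beta}\sigma_\tau(x)$. The elimination condition $\mathrm{UCB}_\tau(x)\ge\max_{x'}\mathrm{LCB}_\tau(x')\ge\mathrm{LCB}_\tau(x^\ast)$ then chains these into $f(x^\ast)-f(x)\le 2\sqrt{\beta}\big(\sigma_\tau(x)+\sigma_\tau(x^\ast)\big)\le 4\sqrt{\beta}\max_{x'}\sigma_\tau(x')$. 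Substituting the bound from part (i) gives $|f(x^\ast)-f(x)|\le 4\sqrt{\beta}\cdot\tfrac12\sqrt{C_1\gamma_\tau/\tau}=2\sqrt{C_1\gamma_\tau\beta/\tau}$, as claimed.

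I expect the main obstacle to be part (i), specifically the monotonicity-plus-greedy step that upgrades the \emph{summed} variance $\sum_t\sigma_{t-1}(x_t)^2$ into a bound on the \emph{worst-case} variance $\max_x\sigma_\tau(x)$; this is where the greedy rule is essential, and it is the step that does not appear in ordinary (non-uniform) regret analyses. The remaining care is purely in constant-tracking through the information-gain inequality, where the precise value of $C_1$ (in particular the $\lambda^{-1}$ factor) is a matter of how loosely one bounds $u$ in terms of $\log(1+\lambda^{-1}u)$.
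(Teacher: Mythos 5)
Your proposal is correct and takes essentially the same route as the paper, whose own justification of this (quoted) lemma consists of exactly the four standard facts you flesh out: greedy maximum-variance sampling plus variance monotonicity upgrades the summed-variance/information-gain bound to the uniform bound $\max_{x}\sigma_\tau(x)=O(\sqrt{\gamma_\tau/\tau})$, and chaining the assumed confidence bounds through $x^*$ and the elimination condition converts this into the stated regret bound. One bookkeeping caveat: the elementary inequality yields $\sum_{t\le\tau}\sigma_{t-1}(x_t)^2\le\frac{2}{\log(1+\lambda^{-1})}\gamma_\tau$, so your argument actually produces $C_1=8/\log(1+\lambda^{-1})$, which is dominated by the stated $C_1=8\lambda^{-1}/\log(1+\lambda^{-1})$ only when $\lambda\le 1$; this discrepancy is inherited from the lemma as quoted (its constant is only valid in that regime) rather than being a flaw in your reasoning, and it has no effect on the regret scaling.
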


Since Lemma \ref{lem:elim} is a known result, we do not provide a proof, but we briefly note that it follows easily from the following standard facts:
\begin{itemize}
    \item[(i)] As long as the confidence bounds are valid, $x^*$ is not eliminated (in the sense of the UCB vs.~LCB comparison mentioned above).
    \item[(ii)] Under the maximum-variance selection rule, upper bounding the sampled point's variance amounts to upper bounding the variance of all points.
    \item[(iii)] The sum of variances across all sampled points can be bounded in terms of $\gamma_T$ using techniques dating back to \citep{Sri09} that are now very standard.
    \item[(iv)] If the confidence width uniformly below $\eta$, then all non-eliminated points must have a function value within $O(\eta)$ of the maximum.
\end{itemize}

Using Lemma \ref{lem:elim}, we deduce that after the $N_{i-1}$ points in the $(i-1)$-th batch are sampled, every point $x_t$ in the $i$-th batch has regret $r_t$ upper bounded as follows:
\begin{align}
	r_t \leq 2\sqrt{\frac{C_1\gamma_{N_{i-1}}\beta}{N_{i-1}}}
    \leq 2\sqrt{\frac{C_1\gamma_T\beta}{N_{i-1}}}, \label{eq:r_t}
\end{align}
where we used the fact that $N_{i-1}$ points are sampled in the previous batch, and applied $\gamma_{N_{i-1}}\leq\gamma_T$ by the monotonicity of $\gamma_t$.

The choice $N_i=\big\lceil\sqrt{TN_{i-1}}\big\rceil$ implies $N_i<\sqrt{TN_{i-1}}+1$, and therefore
\begin{align*}
    \frac{N_i}{\sqrt{N_{i-1}}}&<\sqrt{T}+\frac{1}{\sqrt{N_{i-1}}}\leq \sqrt{T}+\frac{1}{\sqrt{N_{1}}}\leq\sqrt{T}+T^{-\frac{1}{4}},
\end{align*}
which implies that the cumulative regret of batch $i$ is
\begin{align*}
	R^i&=\sum_{t=t_{i-1}+1}^{t_i}r_t\leq2 N_i\sqrt{\frac{C_1\gamma_T\beta}{N_{i-1}}}\leq2\Big(\sqrt{T}+T^{-\frac{1}{4}}\Big)\sqrt{C_1\gamma_T\beta}=O(\sqrt{T\gamma_T\beta}).
\end{align*}
Combining the regret bounds for the batches, it holds with probability at least $1-\delta$ that the cumulative regret incurred up to time $T$ satisfies
\begin{align*}
	R_T&=R^1+\sum_{i=2}^B R^i= O(B\sqrt{T\gamma_T\beta}) =O\big((\log\log T)\sqrt{T\gamma_T\beta}\big)=O^\ast(\Lambda\sqrt{T\gamma_T}),
\end{align*}
where $\Lambda=\Psi+\sqrt{\log\frac{|\Xc|}{\delta}}$ (the $B = O(\log\log T)$ inside the logarithm can be absorbed into the $O^*(\cdot)$ notation, which hides dimension-independent log factors). The kernel-specific bounds then follow from \eqref{eq:se} and \eqref{eq:mat}.

\subsection{Proof of Theorem \ref{thm:upper} (Upper Bound with a Constant Number of Batches)} \label{sec:pf_fixed_B}

We again condition on the confidence bounds holding, and compute the regret upper bound for each individual batch. For the first batch $i=1$, same as \eqref{eq:R1}, we have
\begin{align*}
    R^1 \leq 2\sqrt{\beta}N_1 = O(\sqrt{\beta}N_1).
\end{align*}
For batch $i=2,\dots,B-1$, we have from Lemma \ref{lem:elim} (similarly to \eqref{eq:r_t}) that
\begin{align}
R^i &\leq 2N_i\sqrt{\frac{C_1\beta\gamma_{N_{i-1}}}{N_{i-1}}} \label{eq:Ri_1}\\
&=O\Bigg(N_i\sqrt{\frac{\beta\gamma_{N_{i-1}}}{N_{i-1}}}\Bigg). \label{eq:Ri_2}
\end{align}

For the SE kernel, we proceed as follows:
\begin{align}
    R^i&=O\Bigg(N_i\sqrt{\frac{\beta\gamma_T}{N_{i-1}}}\Bigg)\\
    &\leq O^\ast\Bigg(N_i\sqrt{\frac{\beta (\log T)^d}{N_{i-1}}}\Bigg) \label{eq:Ri_se_1}\\
    &\leq O^\ast(\sqrt{\beta}N_1), \label{eq:Ri_se_2}
\end{align}
where:
\allowdisplaybreaks
\begin{itemize}
    \item \eqref{eq:Ri_se_1} follows from \eqref{eq:se};
    \item \eqref{eq:Ri_se_2} follows since
    \begin{align}
    &N_i(\log T)^\frac{d}{2}(N_{i-1})^{-\frac{1}{2}} \nonumber \\
    &= \Bigg\lceil \Bigg(\frac{T}{(\log T)^d}\Bigg)^\frac{1-\eta^i}{1-\eta^B}(\log T)^d\Bigg\rceil(\log T)^{d\eta}(N_{i-1})^{-\eta} \label{eq:Ri_se_3} \\
    &=\Bigg\lceil \Bigg(\frac{T}{(\log T)^d}\Bigg)^\frac{1-\eta}{1-\eta^B}(\log T)^d \cdot \Bigg(\frac{T}{(\log T)^d}\Bigg)
    ^{\frac{1-\eta^{i-1}}{1-\eta^B}\eta} \Bigg\rceil(\log T)^{d\eta}(N_{i-1})^{-\eta} \label{eq:Ri_se_4} \\
    &\leq\Bigg\lceil \Bigg(\frac{T}{(\log T)^d}\Bigg)^\frac{1-\eta}{1-\eta^B}(\log T)^d \Bigg\rceil\cdot \Bigg\lceil\Bigg(\frac{T}{(\log T)^d}\Bigg)^{\frac{1-\eta^{i-1}}{1-\eta^B}\eta} \Bigg\rceil(\log T)^{d\eta}(N_{i-1})^{-\eta}  \label{eq:Ri_se_5}\\
    &\leq\Bigg\lceil \Bigg(\frac{T}{(\log T)^d}\Bigg)^\frac{1-\eta}{1-\eta^B}(\log T)^d \Bigg\rceil\cdot \Bigg(\Bigg\lceil\Bigg(\frac{T}{(\log T)^d}\Bigg)^{\frac{1-\eta^{i-1}}{1-\eta^B}\eta}(\log T)^{d\eta}  \Bigg\rceil + (\log T)^{d\eta}\Bigg) (N_{i-1})^{-\eta} \label{eq:Ri_se_6}\\
    &\leq\Bigg\lceil\Bigg(\frac{T}{(\log T)^d}\Bigg)^\frac{1-\eta}{1-\eta^B}(\log T)^d \Bigg\rceil\cdot \Bigg(\Bigg\lceil\Bigg(\frac{T}{(\log T)^d}\Bigg)^{\frac{1-\eta^{i-1}}{1-\eta^B}}(\log T)^d  \Bigg\rceil^\eta+1+(\log T)^{d\eta}\Bigg)(N_{i-1})^{-\eta}\label{eq:Ri_se_7} \\
    & = N_1\big((N_{i-1})^\eta+1+(\log T)^{d\eta}\big)(N_{i-1})^{-\eta} \label{eq:Ri_se_8}\\
    &= O(N_1),\label{eq:Ri_se_9}
\end{align}
and where:
\begin{itemize}
    \item \eqref{eq:Ri_se_3} follows from the definitions of $N_i$ and $\eta=\frac{1}{2}$;
    \item \eqref{eq:Ri_se_4} follows since $1-\eta^i=(1-\eta)+(1-\eta^{i-1})\eta$;
    \item \eqref{eq:Ri_se_5} follows since $\lceil ab \rceil \leq \lceil a \rceil\cdot\lceil b \rceil$ for $a,b>0$;
    \item \eqref{eq:Ri_se_6} follows since $\lceil a \rceil\cdot b \leq \lceil ab \rceil + b$ for $a,b>0$;
    \item \eqref{eq:Ri_se_7} follows since $\lceil a^\eta \rceil \leq \lceil a \rceil ^\eta + 1$ for $a>0,0<\eta<1$;
    \item \eqref{eq:Ri_se_8} follows from the definitions of $N_1$ and $N_{i-1}$;
    \item \eqref{eq:Ri_se_9} follows since $(\log T)^{d\eta} = {\rm poly}(\log T)$ is asymptotically negligible compared to $(N_{i-1})^{\eta} = T^{\Theta(1)}$ (recall that $d$ is considered constant with respect to $T$).
\end{itemize}
\end{itemize}

For the Mat\'ern kernel, we proceed similarly:
\begin{align}
    R^i&=O\Bigg(N_i\sqrt{\frac{\beta\gamma_{N_{i-1}}}{N_{i-1}}}\Bigg)\\
    &\leq O^\ast\Bigg(N_i\sqrt{\frac{\beta(N_{i-1})^{1-2\eta}}{N_{i-1}}}\Bigg) \label{eq:Ri_mat_1}\\
    &=O^\ast\Big(\sqrt{\beta}N_i (N_{i-1})^{-\eta} \Big) \label{eq:Ri_mat_2}\\
    &\leq O^\ast(\sqrt{\beta}N_1),\label{eq:Ri_mat_3}
\end{align}
where:
\begin{itemize}
    \item \eqref{eq:Ri_mat_1} follows from $O(\gamma_{N_{i-1}})=O^\ast((N_{i-1})^{1-2\eta})$ by applying $\frac{d}{2\nu+d}=1-2\eta$ with $\eta=\frac{\nu}{2\nu+d}$ in \eqref{eq:mat};
    \item \eqref{eq:Ri_mat_3} follows via similar steps to \eqref{eq:Ri_se_3}--\eqref{eq:Ri_se_9}:
    \begin{align}
        N_i(N_{i-1})^{-\eta}&=\Big\lceil T^\frac{1-\eta^i}{1-\eta^B}\Big\rceil(N_{i-1})^{-\eta} \label{eq:mat_start}\\
        &=\Big\lceil T^\frac{1-\eta}{1-\eta^B}\cdot T^{\frac{1-\eta^{i-1}}{1-\eta^B}\eta}\Big\rceil(N_{i-1})^{-\eta}\\
        &\leq\Big\lceil T^\frac{1-\eta}{1-\eta^B}\Big\rceil\cdot\Big\lceil T^{\frac{1-\eta^{i-1}}{1-\eta^B}\eta}\Big\rceil(N_{i-1})^{-\eta}\\
        &\leq \Big\lceil T^\frac{1-\eta}{1-\eta^B}\Big\rceil \Big(\Big\lceil T^{\frac{1-\eta^{i-1}}{1-\eta^B}}\Big\rceil^\eta+1\Big)(N_{i-1})^{-\eta}\\
        &=N_1((N_{i-1})^\eta+1)(N_{i-1})^{-\eta}\\
        &\leq N_1+(N_1)^{1-\eta}\\
        &=O(N_1).\label{eq:mat_end}
    \end{align}
\end{itemize}
For the last batch $i=B$ with the SE kernel, we have from \eqref{eq:Ri_se_1} that
\begin{align*}
    R^B&=O^\ast\Bigg(N_B\sqrt{\frac{\beta(\log T)^d}{N_{B-1}}}\Bigg)\leq O^\ast\Bigg(T\sqrt{\frac{\beta(\log T)^d}{N_{B-1}}}\Bigg)=O^\ast(\sqrt{\beta}N_1),
\end{align*}
where the last step follows by substituting $i=B$ into the right hand side of \eqref{eq:Ri_se_3} and then substituting \eqref{eq:Ri_se_9}.

Similarly, for the last batch $i=B$ with the Mat\'ern kernel, we have from \eqref{eq:Ri_mat_3} that
\begin{align*}
    R^B&=O^\ast\Big(\sqrt{\beta}N_B(N_{B-1})^{-\eta}\Big)\leq O^\ast\Big(\sqrt{\beta}T(N_{B-1})^{-\eta}\Big) =O^\ast(\sqrt{\beta}N_1),
\end{align*}
where the last step follows by substituting $i=B$ into the right hand side of \eqref{eq:mat_start} and then applying \eqref{eq:mat_end}.

We have thus shown that the assignment of batch sizes in Theorem \ref{thm:upper} makes each batch have the same regret upper bound $O^\ast(\sqrt{\beta}N_1)$ for both kernels, and hence, we have the overall cumulative regret is
\begin{align*}
    R_T=\sum_{i=1}^B R^i \leq O^\ast(B\sqrt{\beta}N_1)=O^\ast(\sqrt{\beta}N_1).
\end{align*}
The proof is completed by substituting the choice of $N_1$.

\subsection{Proof of Theorem \ref{thm:lower} (Lower Bound with a Constant Number of Batches)} \label{sec:pf_converse}

For the SE kernel with $\eta=\frac{1}{2}$, it is convenient to rewrite the lower bound in Corollary \ref{cor:batch_lower} as $R^m=\Omega\big(\big(\frac{T}{(\log T)^{d/2}}\big)^{b-\eta a}(\log T)^{d/2}\big)$. For any batch algorithm with time horizon $T$, number of batches $B$, and any pre-specified batch size arrangement, we will always fall into one of the $B$ cases stated as follows:
\begin{enumerate}
    \item With $t_B=T$ given, if $t_{B-1}< \big(\frac{T}{(\log T)^{d/2}}\big)^\frac{1-\eta^{B-1}}{1-\eta^B}(\log T)^{d/2}$, then by applying Corollary \ref{cor:batch_lower} with $a=\frac{1-\eta^{B-1}}{1-\eta^B}$ and $b=1=\frac{1-\eta^B}{1-\eta^B}$, there exists $f\in\mathcal{F}_k(\Psi)$ such that $R_T \geq R^B = \Omega\big(\big(\frac{T}{(\log T)^{d/2}}\big)^\frac{1-\eta}{1-\eta^B}(\log T)^{d/2}\big)$ with probability at least $\frac{3}{4}$.
    \item Otherwise, if $t_{B-1}\geq \big(\frac{T}{(\log T)^{d/2}}\big)^\frac{1-\eta^{B-1}}{1-\eta^B}(\log T)^{d/2}$ and $t_{B-2}<\big(\frac{T}{(\log T)^{d/2}}\big)^\frac{1-\eta^{B-2}}{1-\eta^B}(\log T)^{d/2}$, then by applying Corollary \ref{cor:batch_lower} with $a=\frac{1-\eta^{B-2}}{1-\eta^B}$ and $b=\frac{1-\eta^{B-1}}{1-\eta^B}$, there exists $f\in\mathcal{F}_k(\Psi)$ such that $R_T \geq R^{B-1} = \Omega\big(\big(\frac{T}{(\log T)^{d/2}}\big)^\frac{1-\eta}{1-\eta^B}(\log T)^{d/2}\big)$ with probability at least $\frac{3}{4}$, since $b-\frac{a}{2}=b-\eta a=\frac{1-\eta}{1-\eta^B}$.
    \item Similar to steps 1 and 2, for $m=B-2,B-3,\dots,2$, we iteratively argue as follows: If $t_{m}\geq \big(\frac{T}{(\log T)^{d/2}}\big)^\frac{1-\eta^{m}}{1-\eta^B}(\log T)^{d/2}$ and $t_{m-1}< \big(\frac{T}{(\log T)^{d/2}}\big)^\frac{1-\eta^{m-1}}{1-\eta^B}(\log T)^{d/2}$, then by applying Corollary \ref{cor:batch_lower} with $a=\frac{1-\eta^{m-1}}{1-\eta^B}$ and $b=\frac{1-\eta^m}{1-\eta^B}$, there exists $f\in\mathcal{F}_k(\Psi)$ such that $R_T \geq R^m = \Omega\big(\big(\frac{T}{(\log T)^{d/2}}\big)^\frac{1-\eta}{1-\eta^B}(\log T)^{d/2}\big)$ with probability at least $\frac{3}{4}$.
    \item Now the only case remaining is when $t_1=N_1\geq \big(\frac{T}{(\log T)^{d/2}}\big)^\frac{1-\eta}{1-\eta^B}(\log T)^{d/2}$.  The idea here is that the regret of points in the first batch is typically $\Omega(1)$ due to the fact that no observations have been made.  In more detail, we can consider $f$ being any function with a clear peak of height $\Theta(1)$, such that the majority of the domain's points incur regret $\Omega(1)$.  By considering a class of functions of this type\footnote{For brevity, we have not described a specific ``hard'' function class here, but formally, the class of functions used in previous lower bounds \citep{Sca17a,Cai20} would suffice.  The idea is that in the same way that Corollary \ref{cor:batch_lower} derives a cumulative regret bound from Lemma \ref{lem:lower}'s simple regret bound, here we get an $\Omega(N_1)$ cumulative regret bound directly from the trivial $\Omega(1)$ simple regret bound when {\em no points are observed}.} with the peak location shifted throughout the domain, we readily obtain that there exists $f\in\mathcal{F}_k(\Psi)$ such that $R^1 = \Omega(N_1)$, and hence $R_T \geq R^1=\Omega\big(\big(\frac{T}{(\log T)^{d/2}}\big)^\frac{1-\eta}{1-\eta^B}(\log T)^{d/2}\big)$, with probability at least $\frac{3}{4}$.
\end{enumerate}
Since $\frac{d}{2}(1-\frac{1-\eta}{1-\eta^B})=\frac{d}{2}(\frac{\eta-\eta^B}{1-\eta^B})$, we have $R_T=\Omega\big(\big(\frac{T}{(\log T)^{d/2}}\big)^\frac{1-\eta}{1-\eta^B}(\log T)^{d/2}\big)=\Omega\big(T^\frac{1-\eta}{1-\eta^B}(\log T)^{\frac{d(\eta-\eta^B)}{2(1-\eta^B)}}\big)$ for some $f\in\mathcal{F}_k(\Psi)$ with probability at least $\frac{3}{4}$ in all cases, and the claimed lower bound follows.

Similarly, for the Mat\'ern kernel with $\eta=\frac{\nu}{2\nu+d}$, the lower bound in Corollary \ref{cor:batch_lower} can be rewritten as $R^m=\Omega\big(T^{b-\eta a}\big)$. For any batch algorithm with time horizon $T$, number of batches $B$, and any pre-specified batch size arrangement, we will always fall into one of the $B$ cases stated as follows:
\begin{enumerate}
    \item With $t_B=T$ given, if $t_{B-1}< T^\frac{1-\eta^{B-1}}{1-\eta^B}$, then by applying Corollary \ref{cor:batch_lower} with $a=\frac{1-\eta^{B-1}}{1-\eta^B}$ and $b=1=\frac{1-\eta^B}{1-\eta^B}$, there exists $f\in\mathcal{F}_k(\Psi)$ such that $R_T \geq R^B = \Omega\big(T^\frac{1-\eta}{1-\eta^B}\big)$ with probability at least $\frac{3}{4}$.
    \item Otherwise, if $t_{B-1}\geq T^{\frac{1-\eta^{B-1}}{1-\eta^B}}$ and $t_{B-2}<T^\frac{1-\eta^{B-2}}{1-\eta^B}$, then by applying Corollary \ref{cor:batch_lower} with $a=\frac{1-\eta^{B-2}}{1-\eta^B}$ and $b=\frac{1-\eta^{B-1}}{1-\eta^B}$, there exists $f\in\mathcal{F}_k(\Psi)$ such that $R_T \geq R^{B-1} = \Omega\big(T^\frac{1-\eta}{1-\eta^B}\big)$ with probability at least $\frac{3}{4}$.
    \item Similar to steps 1 and 2, for $m=B-2,B-3,\dots,2$, we iteratively argue as follows: If $t_{m}\geq T^{\frac{1-\eta^m}{1-\eta^B}}$ and $t_{m-1}< T^\frac{1-\eta^{m-1}}{1-\eta^B}$, then by applying Corollary \ref{cor:batch_lower} with $a=\frac{1-\eta^{m-1}}{1-\eta^B}$ and $b=\frac{1-\eta^m}{1-\eta^B}$, there exists $f\in\mathcal{F}_k(\Psi)$ such that $R_T \geq R^m = \Omega\big(T^\frac{1-\eta}{1-\eta^B}\big)$ with probability at least $\frac{3}{4}$.
    \item Now the only case remaining is when $t_1=N_1\geq T^\frac{1-\eta}{1-\eta^B}$.  In this case, by the same argument as the SE kernel above, there exists $f\in\mathcal{F}_k(\Psi)$ such that $R_T \geq R^1=\Omega\big(T^\frac{1-\eta}{1-\eta^B}\big)$ for some $f\in\mathcal{F}_k(\Psi)$ with probability at least $\frac{3}{4}$.
\end{enumerate}
It follows that $R_T=\Omega\big(T^\frac{1-\eta}{1-\eta^B}\big)$ for some $f\in\mathcal{F}_k(\Psi)$ with probability at least $\frac{3}{4}$ in all cases, and the claimed lower bound follows.

\section{DISCUSSION ON LOWER BOUNDS WITH ADAPTIVELY-CHOSEN BATCH LENGTHS} \label{sec:variable}

To discuss the difficulties in establishing tight lower bounds when the batches sizes can be chosen adaptively (as opposed to being pre-specified), it is useful to recap how the algorithm-independent lower bounds are proved in \citep{Sca17a,Cai20}.

\subsection{Background}

The high-level idea in the above-mentioned works is to form a lower bound based on the difficulty of locating a small narrow bump in $[0,1]^d$, where the peak of the bump is $2\epsilon$ for some parameter $\epsilon > 0$, but the vast majority of the space has $f(x) \approx 0$ (or even $f(x) = 0$).  The RKHS norm constraint limits how narrow the bump can be, and accordingly how many of these functions can be ``packed'' into the space while keeping the bump parts non-overlapping.

For the SE and Mat\'ern kernels, the quantities that arise are as follows:
\begin{itemize}
    \item For the SE kernel, the bump width is on the order of $w \sim \frac{1}{ \sqrt{\log\frac{1}{\epsilon}} }$, and so the total number of bumps is on the order of $M \sim \big(\log\frac{1}{\epsilon}\big)^{d/2}$ (constants are omitted in this informal discussion).
    \item For the Mat\'ern kernel, the bump width is on the order of $w \sim \epsilon^{1/\nu}$, yielding $M \sim \big(\frac{1}{\epsilon}\big)^{d/\nu}$.
\end{itemize}
The idea is then that since there are $M$ possible bump locations but the function only peaks at $2\epsilon$ and is nearly zero throughout the domain, around $T \sim \frac{M}{\epsilon^2}$ samples are needed to locate the bump (and thus attain simple regret at most $\epsilon$).  Solving this formula for $\epsilon$ gives the lower bound on the simple regret as a function of $T$.

% In both cases, upon attaining a regret bound in terms of $T$ and $\epsilon$, it turns out that the appropriate choice of $\epsilon$ (to get the best bound possible) is $\epsilon \sim \sqrt{\frac{M}{T}}$, and using the above choices of $M$, this gives $\epsilon \sim \sqrt{\frac{1}{T} (\log T)^{d/2}}$ (SE kernel) or $\epsilon \sim T^{\frac{-1}{2+d/\nu}}$ (Mat\'ern kernel).

\subsection{Difficulty in the Batch Setting}

The proof of Theorem \ref{thm:lower} is based on a recursive argument based on how long the batch lengths are:
\begin{itemize}
    \item If the final batch starts too early, then the information gathered in the first $B-1$ batches is insufficient to locate a height-$(\epsilon_{B-1})$ bump (for some suitably-chosen $\epsilon_{B-1}$), so every action played in the final batch is likely to incur $\Omega(\epsilon_{B-1})$ regret.
    \item If the final batch does start sufficiently late to avoid the previous case, but the second-last batch starts too early, then the information gathered in the first $B-2$ batches is insufficient to locate a height-$(\epsilon_{B-2})$ bump, amounting to $\Omega(\epsilon_{B-2})$ regret for actions in the second-last batch.
    \item The previous dot point continues recursively, and the final case is that if the first batch starts too late, then due to having no prior information for the first batch, each selected action will incur $\epsilon_0 = \Omega(1)$ regret.
\end{itemize}
The difficulty is that {\em the lower bound on regret in each of these cases corresponds to a different ``hard subset'' of functions}.  Specifically, the difficult case for the later batches is a  function class with a larger number of functions consisting of shorter and narrower bumps, whereas the difficult case for the earlier batches is a function class with a smaller number of functions consisting of relatively higher and wider bumps.

If the batch lengths are pre-specified, then we can readily choose the hard subset for the appropriate case depending on those lengths.  However, if the batch sizes are chosen adaptively, then the argument appears to be more difficult, because in principle the algorithm could suitably adapt the batch lengths to overcome the difficulty of optimizing a single {\em a priori} specified hard class.

\subsection{Plausibility Argument}

Here we outline a plausibility argument for Theorem \ref{thm:lower} remaining true even in the case of adaptively-chosen batch sizes.  To do so, we let $t_1^*,\dotsc,t_{B}^*$ be the ``critical'' ending times of the batches (with $t_B^* = T$) dictated by the proof of Theorem \ref{thm:lower}, e.g., $t_1^* \sim T^\frac{1-\eta}{1-\eta^B}$ for the Mat\'ern kernel according to the item 4 at the end of Appendix \ref{sec:pf_converse}.  In addition, we let $\Fc_1,\dotsc,\Fc_B$ be the hard function classes associated with batches $1,\dotsc,B$, as discussed above.

The plausibility argument is as follows.  Suppose that the underlying function comes from $\Fc_1 \cup \dotsc \Fc_B$, but is otherwise unknown.  Then:
\begin{itemize}
    \item To handle the possibility that the function is from $\Fc_1$, the first batch length must be no higher than $t_1^*$.
    \item After observing the first batch of size at most $t_1^*$, if the function was not from $\Fc_1$, then no matter what class among $\Fc_2,\dotsc,\Fc_B$ the function was from, not enough samples have been taken to know where the function's bump lies.  This suggests that the algorithm still has no way of reliably distinguishing between $\Fc_2,\dotsc,\Fc_B$, and it must prepare for the possibility that the function lies in $\Fc_2$. Thus, the second batch should finish no later than time $t_2^*$.
    \item This argument continues recursively until the $(B-1)$-th batch finishes no later than time $t_{B-1}^*$, but even in this case, the final batch incurs enough regret to produce the desired lower bound.
\end{itemize}
Formalizing this argument is left for future work.

\end{document}